\documentclass{article}

\usepackage{microtype}
\usepackage{graphicx}
\usepackage{subcaption}
\usepackage{booktabs} %
\usepackage{xcolor}

\usepackage[preprint]{icml2026}

\usepackage{amsmath}
\usepackage{amssymb}
\usepackage{mathtools}
\usepackage{amsthm}

\usepackage[capitalize,noabbrev]{cleveref}

\theoremstyle{plain}
\newtheorem{theorem}{Theorem}[section]

\newtheorem{lemma}[theorem]{Lemma}

\theoremstyle{definition}

\theoremstyle{remark}

\usepackage[textsize=tiny]{todonotes}

\icmltitlerunning{Probe-Space Preconditioning for Fast and Stable Zero-Order Training}

\usepackage{booktabs}       %
\usepackage{amsfonts}       %
\usepackage{nicefrac}       %
\usepackage{microtype}      %
\usepackage{xcolor}         %
\usetikzlibrary{calc,arrows.meta}

\usepackage{graphicx}

\usepackage{pgfplots}
\pgfplotsset{compat=1.17}
\usepackage{pgfplotstable}

\newcommand{\maybeincludegraphics}[2][]{%
  \IfFileExists{#2}{\includegraphics[#1]{#2}}{\fbox{\rule{0pt}{1.2in}\rule{0.9\linewidth}{0pt}}}%
}

\newcommand{\R}{\mathbb{R}}

\begin{document}

\twocolumn[
  \icmltitle{Probe-Space Preconditioning for Fast and Stable Zero-Order Training }

  \begin{icmlauthorlist}
    \icmlauthor{Francois Chaubard}{stanford}
    \icmlauthor{Mykel J. Kochenderfer}{stanford}
    \icmlauthor{Chris R{\'e}}{stanford}
  \end{icmlauthorlist}

  \icmlaffiliation{stanford}{Stanford University, Stanford, CA, USA}

  \icmlcorrespondingauthor{Francois Chaubard}{fchaubar@stanford.edu}

  \icmlkeywords{Machine Learning, ICML}

  \vskip 0.3in
]

\printAffiliationsAndNotice{}  %

\begin{abstract}
Backpropagation (BP) dominates deep learning but imposes a massive memory tax. For example, training OPT-30B with Adam requires $\approx$ 600GB of GPU memory (assuming batch size 8 and sequence length 2048). Alternatively, zero-order optimization (ZOO) trains in inference-mode (requiring only $\approx$ 60GB for the same model): no stored activations, no gradients, and no optimizer states. However, ZOO convergence has lagged behind BP. In this work, we evaluate two methods to close this gap. First, we show that reallocating training compute budget from many steps to large effective batch sizes with many perturbations (or probes) but fewer steps, allows 1SPSA~\cite{spall1992multivariate} to outperform zero order methods like MeZO~\cite{malladi2022mezo} with less training compute. Next, we introduce \textbf{1.5-SPSA}, adding a single ``clean" forward-pass per step to 1SPSA to calculate a cheap diagonal preconditioner in probe-space, which improves convergence rate and convergence by down-weighting high curvature directions. Benchmarking on 6 post-training datasets on both Qwen3 and OPT model families, we show that 1.5-SPSA achieves State-of-the-Art results over previous ZOO solvers with much less optimization steps. For example, we train OPT-13B (for direct comparison to MeZO) and find 1.5-SPSA achieves +3.1\% accuracy on SST-2 over both MeZO and BP in only 70 steps vs. MeZO's 100,000 steps. Finally, we combine an 8-bit-packing random generator, triton fused unpack/apply kernels, and distributed parallelism to achieve fast and stable training of models as large as OPT-30B in-place on commodity GPUs (e.g. A100).
\end{abstract}

\section{Introduction}
\label{sec:introduction}

Current deep learning (DL) solvers typically rely on Backpropagation (BP) combined with adaptive optimizers like Adam~\cite{kingma2017adammethodstochasticoptimization}. While effective, this paradigm incurs a massive memory tax. For example, training OPT-30B with Adam requires $\approx$ 600GB of GPU memory (assuming batch size 8 and sequence length 2048). This prohibits training on a single accelerator for large models, forcing the use of large, expensive accelerator clusters and complex distributed model sharding.

Additionally, neural network loss landscapes typically exhibit highly ill-conditioned hessians with massive eigenvalue spreads~\cite{ghorbani2019investigation, sagun2017empirical} requiring adaptive optimizers like Adam for stable training. This is exacerbated in deep settings like Reinforcement Learning or Large Language Model (LLM) post-training ~\cite{dauphin2014saddle}.

This provides the following desiderata for a new solver: 
\begin{enumerate}
    \setlength{\itemsep}{1pt}
    \setlength{\parskip}{0pt}
    \setlength{\parsep}{0pt}
    \item \textbf{Inference-Mode Memory Use}: require no saved activations, gradients, or moving averages
    \item \textbf{Robust to Ill-Conditioned Loss}: capable of converging quickly even in ill-conditioned loss landscapes as is common in training neural networks
    \item \textbf{Efficient Training Compute}: train models with efficiency, measured as the amount of performance gain per floating-point operation
\end{enumerate}

\begin{figure*}[!t]
\centering
\begin{tikzpicture}[x=1cm,y=1cm, font=\scriptsize, line cap=round, line join=round]

\definecolor{oursblue}{RGB}{45,110,210}
\definecolor{mezored}{RGB}{205,70,70}
\definecolor{softgray}{RGB}{140,140,140}

\def\W{5.35}
\def\H{3.80}
\def\G{0.25}
\pgfmathsetmacro{\Bx}{\W+\G}
\pgfmathsetmacro{\Cx}{2*(\W+\G)}

\newcommand{\ann}[3][]{%
  \node[
    fill=white,
    fill opacity=0.92,
    text opacity=1,
    inner sep=1.2pt,
    rounded corners=1pt,
    align=center,
    font=\tiny,
    #1
  ] at #2 {#3};
}

\begin{scope}[shift={(0,0)}]
  \draw[black!25] (0,0) rectangle (\W,\H);
  \node[anchor=west,font=\small\bfseries] at (0.25,\H-0.25) {(A) Memory};

  \def\xlab{1.55}
  \def\xbar{1.70}
  \def\Lbp{2.75}     %
  \def\Lzo{0.275}    %

  \def\barHbp{0.26}
  \def\barHzo{0.16}

  \def\yBP{2.50}
  \def\yZO{1.72}

  \node[anchor=east] at (\xlab,\yBP) {BP + Adam};
  \node[anchor=east] at (\xlab,\yZO) {1.5-SPSA};

  \fill[black!18] (\xbar,\yBP-\barHbp/2) rectangle ++(\Lbp,\barHbp);
  \fill[oursblue!45] (\xbar,\yZO-\barHzo/2) rectangle ++(\Lzo,\barHzo);

  \node[anchor=east] at (\xbar+\Lbp-0.06,\yBP) {$\approx$600GB};
  \node[anchor=west] at (\xbar+\Lzo+0.14,\yZO) {$\approx$60GB};

  \node[font=\scriptsize] at (\W/2,0.42) {$ 10\times$ less memory (inference-mode)};
\end{scope}

\begin{scope}[shift={(\Bx,0)}]
  \draw[black!25] (0,0) rectangle (\W,\H);
  \node[anchor=west,font=\small\bfseries] at (0.25,\H-0.25) {(B) Training compute};

  \begin{scope}[shift={(0.78,0.98)}]
    \def\PW{4.40}
    \def\PH{2.35}

    \draw[->,black] (0,0) -- (\PW,0);
    \draw[->,black] (0,0) -- (0,\PH);

    \node[font=\tiny] at ({0.56*\PW},-0.18) {forward-passes per step};
    \node[font=\tiny] at ({0.56*\PW},-0.38) {(batch size $\times$ perturbations)};
    \node[font=\scriptsize, rotate=90] at (-0.40,{0.52*\PH}) {steps};

    \foreach \k in {0.30,0.52}{
      \draw[black!14,densely dashed]
        plot[smooth,domain=\k:1.0,samples=50] ({\PW*\x},{\PH*(\k/\x)});
    }
    \ann[anchor=north east]{(\PW-0.05,\PH-0.05)}{fixed compute\\budget};

    \coordinate (mezo) at ({0.22*\PW},{0.80*\PH});
    \coordinate (ours) at ({0.80*\PW},{0.24*\PH});

    \fill[mezored] (mezo) circle (1.5pt);
    \fill[oursblue] (ours) circle (1.5pt);

    \draw[-{Stealth[length=2.2mm]}, thick, black!70] (mezo) -- (ours);

    \ann[anchor=west]{($(mezo)+(-0.95,-0.37)$)}{MeZO\\(many steps)};
    \ann[anchor=east]{($(ours)+(-0.42,-0.1)$)}{1.5-SPSA\\(few steps)};
  \end{scope}

  \node[font=\scriptsize] at (\W/2,0.24) {Fewer steps; more parallelism per step};
\end{scope}

\begin{scope}[shift={(\Cx,0)}]
  \draw[black!25] (0,0) rectangle (\W,\H);
  \node[anchor=west,font=\small\bfseries] at (0.25,\H-0.25) {(C) 1-D curvature $\hat{c_i}$ reweighting};

  \begin{scope}[shift={(0.78,0.98)}]
    \def\PW{4.40}
    \def\PH{2.35}

    \draw[->,black] (0,0) -- (\PW,0) node[below,font=\scriptsize] {$\theta$};
    \draw[->,black] (0,0) -- (0,\PH);
    \node[font=\scriptsize, rotate=90] at (-0.40,{0.52*\PH}) {loss};

    \draw[very thick]
      (0.25,2.05)
      .. controls (1.00,2.45) and (1.45,2.45) .. (1.75,1.95)
      .. controls (2.10,0.95) and (2.55,0.55) .. (3.20,0.54)
      .. controls (3.75,0.54) and (4.05,0.53) .. (4.30,0.48);

    \coordinate (highc) at (1.55,1.90);  %
    \coordinate (lowc)  at (3.15,0.54);  %

    \coordinate (high_end_full) at ($(highc)+(0.82,-0.82)$);
    \coordinate (low_end_full)  at ($(lowc)+(0.92,-0.16)$);

    \draw[-{Stealth[length=2.0mm]}, thick, softgray] (highc) -- (high_end_full);
    \draw[-{Stealth[length=2.0mm]}, thick, softgray] (lowc)  -- (low_end_full);

    \coordinate (high_end_w) at ($(highc)+(0.32,-0.32)$);
    \draw[very thick, oursblue] (highc) -- (high_end_w);
    \fill[oursblue] (high_end_w) circle (0.9pt);

    \coordinate (low_end_w) at (low_end_full);
    \draw[very thick, oursblue] (lowc) -- (low_end_w);
    \fill[oursblue] (low_end_w) circle (0.9pt);

    \ann[anchor=north east]{(\PW-0.05,\PH-0.05)}{
    $w_i=|\hat{c_i}|^{\alpha=0.1}$ \\
    \\
    gray: default step \\
    {\color{oursblue}blue: 1.5-SPSA step} };

    \ann[anchor=west]{(0.55,1.85)}{$w_i \ll 1$ };
    \ann[anchor=west]{(3.25,0.75)}{ $w_i \approx 1$};
  \end{scope}

  \node[font=\scriptsize] at (\W/2,0.24) {$w_i$ down-weights high $\hat{c_i}$ perturbations for stability};
\end{scope}

\end{tikzpicture}

\caption{
Overview of 1.5-SPSA. (A) Inference-mode training avoids optimizer state and stored activations, reducing memory relative to BP+Adam. (B) forward-passes are reallocated from many optimization steps to large (parallelizable) batch size $\times$ perturbations per step for $44\times$ less training compute to achieve SOTA test accuracy via a ZOO method (on SST-2). (C) Directional curvature estimates $\hat{c}_i$ set weights $w_i = |\hat{c_i}|^{\alpha=0.1}$: high-curvature directions cause instability so steps are down-weighted, while flat regions yield $ w_i \approx 1$, which improves stability, convergence rate, and test accuracy with inference-mode memory use.
}
\label{fig:overview_15spsa}
\end{figure*}

Derivative-Free Optimization (DFO), or Zero-Order Optimization (ZOO), is a promising area as training is done in inference-mode: no stored activations, no gradients, and no optimizer states. 1SPSA (Simultaneous Perturbation Stochastic Approximation)~\cite{spall1992multivariate}, one of the most famous ZOO methods, performs central difference approximation along a random perturbation (or probe) to estimate gradients versus calculating derivatives and trains in inference-mode, minimizing GPU memory use. Training OPT-30B with 1SPSA, for example, requires only $\approx$ 60GB, effectively reducing the memory footprint by $\approx10\times$ compared to Adam. However, 1SPSA requires many forward-passes, which comes with more compute per step. Additionally, 1SPSA introduces large perturbation noise on top of Adam's batch noise, both of which must be mitigated to ensure stable convergence which we analyze in the post-training setting in \cref{sec:empirical_analysis}. Many recent methods have attempted to solve this, most notably MeZO~\cite{malladi2022mezo} which adapts 1SPSA for LLM post-training in-place, requiring 2 forward-passes per step for 100,000 steps. In this paper, we discover two novel and independent ways to increase convergence beyond MeZO and in some cases BP, with more parallelization and less total training compute.

For our first contribution, we study where 1SPSA should spend its forward-passes: on more optimization steps, or on more computation per step. Under a fixed forward-pass budget
$F_{1\text{SPSA}} = s \times a \times 2 \times n_{\text{pert}}$,
we sweep batch size (via gradient accumulation at micro-batch $=16$) and $n_{\text{pert}}$ while holding $F_{1\text{SPSA}}=500{,}000$ on OPT-13B SST-2. \Cref{fig:spsa_stacked_heatmaps} shows that reallocating budget toward larger effective batch size and more perturbations per step yields substantially higher accuracy in only tens of steps. For example, at effective batch size $=128$ and $n_{\text{pert}}=160$, 1SPSA reaches 94.2\% in 80 steps using 205k forward-passes, versus MeZO’s 91.4\% which uses similar forward-passes and +2.8 points. However, this regime is parallelizable across perturbations and accumulation steps so 1SPSA is capable of far faster wall-clock training time. Despite strong early progress, 1SPSA can become unstable late in training as curvature varies dramatically across random directions. Adam-style diagonal preconditioning would typically mitigate this, but storing $O(d)$ moments is incompatible with our inference-mode desiderata and estimating in only tens of steps is unlikely to assist convergence. 

For our second contribution, we introduce \textbf{1.5-SPSA}, adding a single unperturbed loss evaluation per step to estimate a cheap preconditioner in probe-space, that down-weights high-curvature probes described in \cref{sec:method}. This stabilizes training and permits larger step sizes. On OPT-13B SST-2, 1.5-SPSA reaches 94.5\% in 70 steps using 179k forward-passes, outperforming MeZO by +3.1 points with less compute and exceeding BP (+2.5 points), as shown in \Cref{fig:spsa_stacked_heatmaps}. To ensure this is not just a property of OPT, we also test on Qwen3-1.7B/8B. \Cref{tab:qwen3_8b_tasks} and \Cref{tab:qwen_lrs} shows similar performance improvement over 1SPSA and BP as 1.5-SPSA expands the stable learning-rate range and improves convergence rates and convergence.

For our third contribution, we investigate the root cause of these gains in controlled stress tests, where we discover 1.5-SPSA's improvement over 1SPSA is proportional to the loss landscapes's hessian condition number $\kappa$. Curvature reweighting in high $\kappa$ losses yields up to $6\times$ fewer steps-to-zero loss and little effect in very low $\kappa$ as described in \cref{sec:experiments}. 

Finally, for our fourth contribution, we keep training in-place and fast with the following innovations. We combine an 8-bit-packing random generator, triton fused unpack/apply kernels, and distributed parallelism to minimize wall-clock per step, enabling inference-mode training on models as large as OPT-30B on commodity accelerators (e.g. A100) described in \cref{alg:dist_15spsa} and \cref{app:triton_code_bitpack}.

\begin{figure}[!t]
\centering
\begin{tikzpicture}

\def\accmin{83.0}
\def\accmax{95.0}

\def\Rlow{244}\def\Glow{176}\def\Blow{167}
\def\Rhigh{181}\def\Ghigh{229}\def\Bhigh{197}

\newcommand{\hmcell}[5]{%
  \pgfmathsetmacro{\t}{(#3-\accmin)/(\accmax-\accmin)}%
  \pgfmathsetmacro{\tt}{max(0,min(1,\t))}%
  \pgfmathtruncatemacro{\R}{\Rlow + (\Rhigh-\Rlow)*\tt}%
  \pgfmathtruncatemacro{\G}{\Glow + (\Ghigh-\Glow)*\tt}%
  \pgfmathtruncatemacro{\B}{\Blow + (\Bhigh-\Blow)*\tt}%
  \definecolor{cellcolor}{RGB}{\R,\G,\B}%
  \node[
    fill=cellcolor,
    draw=black!18,
    minimum size=1.0,
    inner sep=6pt,
    font=\tiny,
    align=center
  ] at (#1,-#2) {\shortstack{#3\%\\[-1pt](#4)\\[-1pt](#5)}};%
}

\begin{scope}[shift={(0,0)}]

\node[font=\normalsize] at (2.5, 1.) {1SPSA};
\node[font=\normalsize] at (2.5, 0.55) {Batch size};

\foreach \c/\lab in {1/16,2/128,3/512,4/1024}{
  \node[font=\normalsize] at (\c,0.0) {\lab};
}

\foreach \r/\lab in {1/20,2/40,3/160,4/640}{
  \node[font=\normalsize, anchor=east] at (0.25,-\r) {\lab};
}

\node[font=\normalsize, rotate=90] at (-0.990,-2.5) {\# perturbations};

\hmcell{1}{1}{80.5}{2512}{102k}
\hmcell{2}{1}{92.7}{45}{14k}
\hmcell{3}{1}{93.1}{40}{51k}
\hmcell{4}{1}{93.8}{35}{90k}

\hmcell{1}{2}{80.5}{1819}{146k}
\hmcell{2}{2}{92.2}{323}{207k}
\hmcell{3}{2}{93.6}{79}{202k}
\hmcell{4}{2}{93.1}{16}{82k}

\hmcell{1}{3}{88.3}{210}{67k} 
\textbf{\hmcell{2}{3}{94.2}{80}{205k}}
\hmcell{3}{3}{92.2}{8}{82k}
\hmcell{4}{3}{93.1}{7}{143k}

\hmcell{1}{4}{89.9}{33}{42k}
\hmcell{2}{4}{92.0}{19}{195k}
\hmcell{3}{4}{93.1}{7}{287k}
\hmcell{4}{4}{93.1}{6}{492k}

\begin{scope}[shift={(5.055,-4.25)}]
  \def\cbw{0.22}
  \def\cbh{3.55}

  \foreach \i in {0,...,59}{
    \pgfmathsetmacro{\tt}{\i/59}%
    \pgfmathtruncatemacro{\R}{\Rlow + (\Rhigh-\Rlow)*\tt}%
    \pgfmathtruncatemacro{\G}{\Glow + (\Ghigh-\Glow)*\tt}%
    \pgfmathtruncatemacro{\B}{\Blow + (\Bhigh-\Blow)*\tt}%
    \pgfmathsetmacro{\yy}{\tt*\cbh}%
    \definecolor{cbcolor}{RGB}{\R,\G,\B}%
    \fill[cbcolor] (0,\yy) rectangle (\cbw,\yy+\cbh/60);
  }

  \draw[black!30] (0,0) rectangle (\cbw,\cbh);

  \foreach \v in {83,86,89,92,95}{
    \pgfmathsetmacro{\yy}{(\v-\accmin)/(\accmax-\accmin)*\cbh}
    \draw[black!45] (\cbw,\yy) -- ++(0.12,0);
    \node[font=\scriptsize, anchor=west] at (\cbw+0.16,\yy) {\v};
  }
  \node[font=\scriptsize, rotate=90] at (1,2) {SST-2 Test Acc (\%)};
\end{scope}

\end{scope}

\begin{scope}[shift={(0,-6.35)}]

\node[font=\normalsize] at (2.5, 1.) {1.5-SPSA};
\node[font=\normalsize] at (2.5, 0.55) {Batch size};

\foreach \c/\lab in {1/16,2/128,3/512,4/1024}{
  \node[font=\normalsize] at (\c,0.0) {\lab};
}

\foreach \r/\lab in {1/20,2/40,3/160,4/640}{
  \node[font=\normalsize, anchor=east] at (0.25,-\r) {\lab};
}

\node[font=\normalsize, rotate=90] at (-0.990,-2.5) {\# perturbations};

\hmcell{1}{1}{85.6}{2132}{85k}
\hmcell{2}{1}{93.1}{38}{12k}
\hmcell{3}{1}{93.1}{30}{38k}
\hmcell{4}{1}{93.1}{32}{82k}

\hmcell{1}{2}{88.3}{1099}{88k}   
\hmcell{2}{2}{93.3}{80}{51k}
\hmcell{3}{2}{94.5}{70}{179k}
\hmcell{4}{2}{93.8}{12}{61k}

\hmcell{1}{3}{93.1}{120}{38k}     
\textbf{\hmcell{2}{3}{94.5}{70}{179k}}
\hmcell{3}{3}{92.2}{29}{297k}
\hmcell{4}{3}{93.1}{7}{143k}

\hmcell{1}{4}{92.1}{30}{38k}     
\hmcell{2}{4}{92.4}{25}{256k}    
\hmcell{3}{4}{92.9}{6}{246k}
\hmcell{4}{4}{93.1}{6}{492k}

\begin{scope}[shift={(5.055,-4.25)}]
  \def\cbw{0.22}
  \def\cbh{3.55}

  \foreach \i in {0,...,59}{
    \pgfmathsetmacro{\tt}{\i/59}%
    \pgfmathtruncatemacro{\R}{\Rlow + (\Rhigh-\Rlow)*\tt}%
    \pgfmathtruncatemacro{\G}{\Glow + (\Ghigh-\Glow)*\tt}%
    \pgfmathtruncatemacro{\B}{\Blow + (\Bhigh-\Blow)*\tt}%
    \pgfmathsetmacro{\yy}{\tt*\cbh}%
    \definecolor{cbcolor}{RGB}{\R,\G,\B}%
    \fill[cbcolor] (0,\yy) rectangle (\cbw,\yy+\cbh/60);
  }

  \draw[black!30] (0,0) rectangle (\cbw,\cbh);

  \foreach \v in {83,86,89,92,95}{
    \pgfmathsetmacro{\yy}{(\v-\accmin)/(\accmax-\accmin)*\cbh}
    \draw[black!45] (\cbw,\yy) -- ++(0.12,0);
    \node[font=\scriptsize, anchor=west] at (\cbw+0.16,\yy) {\v};
  }
  \node[font=\scriptsize, rotate=90] at (1,2) {SST-2 Test Acc (\%)};
\end{scope}

\end{scope}

\end{tikzpicture}

\caption{
Comparison of 1SPSA (top) and 1.5-SPSA (bottom) for OPT-13B sweeping over $n_{pert}$ and batch size per optimization step. Each cell reports: test accuracy on SST-2 (top), optimization steps to convergence (middle), forward-passes to convergence (bottom). Convergence is defined to be when validation loss has plateaued. All configurations are compared at equal total forward-pass budgets (500,000 forward-passes), although convergence may happen before the final optimization step. We find more batch size and perturbations per step seems to increase convergence rate, but only improves final convergence value
up to a point of diminishing returns. We also find that 1.5-SPSA outperforms 1SPSA in both final convergence and convergence rate. 
}
\label{fig:spsa_stacked_heatmaps}
\end{figure}

\section{Background: Stochastic Approximation and Finite Differences}

Stochastic Approximation (SA) finds roots of noisy functions. Robbins-Monro (1951) introduced the field for noisy gradients, proposing the iterative update:
\begin{equation}
    \theta_{k+1} = \theta_k - \lambda \hat{g}(\theta_k)
\end{equation}
where $\hat{g}$ is a noisy gradient estimate for model weights $\theta \in \R^d$ for loss function $L(\theta)$ and a step size $\lambda$. Kiefer-Wolfowitz (1952) extended this to \textit{noisy function values} using component-wise finite differences. However, the Kiefer-Wolfowitz estimator requires $2d$ forward-passes for a $d$-dimensional gradient, which is prohibitively expensive for deep learning in large models (e.g. $d \sim 10^9$).

To overcome the dimensionality curse, Spall (1992) introduced \textbf{Simultaneous Perturbation Stochastic Approximation (SPSA)}. SPSA approximates the gradient using a random perturbation vector $z \in \mathbb{R}^d$ and only two function evaluations (forward-passes), independent of dimension $d$ to perform central difference approximation. 

\textbf{1SPSA}. The standard gradient estimator for 1SPSA is:
\begin{equation}
\hat{g}(\theta) = \frac{1}{2n_{pert} } \sum_{i=1}^{n_{pert}}\frac{L(\theta + \epsilon z_i) - L(\theta - \epsilon z_i)}{2\epsilon} z_i^{-1}
\end{equation}
for $\epsilon>0$ and $n_{pert}$ perturbations, where $z_i^{-1}$ is the coordinate-wise inverse. This is typically sampled from the \textbf{Rademacher distribution} ($z_i \in \{-1, +1\}$), so $z^{-1} = z$, simplifying implementation, minimizing memory, and speeding up convergence to the true gradient ~\cite{spall1992multivariate}.

\textbf{2SPSA}. Spall (1997) later introduced 2SPSA to estimate a Hessian preconditioner~\cite{spall1997accelerated} in parameter space. However, estimating a $d \times d$ Hessian is impossible for modern LLMs. Standard optimizers like Adam estimate a diagonal preconditioner, which is more tractable, but still requires $O(d)$ memory~\cite{kingma2017adammethodstochasticoptimization}. We seek a solver that does not require additional $O(d)$ memory use.

\subsection{Modern ZOO for neural networks}

The main goal of modern ZOO methods adapted to train large neural networks is to reduce both (i) minibatch noise and (ii) perturbation/estimator noise, while keeping memory overhead reasonable. The way they do so differs. We breakdown the main techniques used in modern ZOO methods below.

\paragraph{Structured perturbations for LLMs.}
A practical obstacle in large models is that a single global perturbation can mix parameters with very different scales across layers. A MeZO variant called LeZO~\citep{wang2024simultaneous} addresses this in LLM post-training by applying perturbations in a structured way (layerwise), improving stability at a fixed query budget. Related work explores other structured sparsity strategies for scaling ZO updates in large models like DeepZero~\citep{chen2024deepzero}.

\paragraph{Adaptive moment methods.}
Inspired by first-order optimizer structure, these approaches maintain momentum and/or adaptive per-coordinate learning rates based on gradient estimates. For example, ZO-AdaMM~\citep{chen2019zo_adamm} provides convergence guarantees with a dimension-dependent slowdown typical of ZOO estimators. These methods can improve practical convergence, but they generally re-introduce optimizer-state memory tax that our in-place setting aims to avoid.

\paragraph{Learned subspaces.}
To improve sample efficiency, several methods reduce the perturbation search space by restricting search to a much smaller subspace, or by adapting the sampling distribution of perturbation directions over time. ASEBO learns an ``active'' subspace for evolution-strategy style gradient estimates~\citep{cottrell2020asebo}, while RSVP proposes a variance-reduced ZO scheme~\citep{gautam2024variancereducedzerothordermethodsfinetuning}. In a complementary theory line, ZOO variance-reduction methods build on SVRG/SPIDER-style ideas and provide improved query complexity for finding approximate stationary points in nonconvex problems.

\paragraph{Evolution strategies (ES).}
ES methods estimate gradients of a smoothed objective from a population of perturbed parameters and are attractive for highly parallel systems. Classic families include NES~\citep{wierstra2008nes} and CEM~\citep{RubinsteinKroese2004CrossEntropy}; modern large-scale demonstrations show strong distributed scaling properties~\citep{salimans2017evolution}.

Our method attempts to maintain the simplicity of 1SPSA exploring how far we can go with very slight tweaks to the original algorithm. 

\section{Motivation}
\label{sec:motivation}

\subsection{Training Compute Budget Analysis for 1SPSA}
\label{sec:compute_allocation}
Our first investigation into 1SPSA is to understand the impact of the choice of batch size and number of perturbations per optimization step to determine what is the most efficient use of training compute. We measure training compute as the total number of forward-passes which scales with accumulation steps $a$, number of perturbations $n_{pert}$, and optimization steps $s$. The total number of forward-passes then is calculated as $F_{1SPSA} = s \times a \times 2 \times n_{pert}$. We use matched forward-pass budgets when comparing batch size and perturbation allocations to fairly compare configurations. Many ZOO baselines allocate budget toward large $s$ with small $a$ and small $n_{pert}$. 

For our first observation, we find that budget allocations with small $s$ and large $(a,n_{pert})$ can reach much stronger accuracy in only \emph{tens of optimization steps}. While it is intuitive that higher batch size and more perturbations would result in more stable training, it is unexpected that it would outperform final convergence for both MeZO and BP in such few optimization steps.  

\Cref{fig:spsa_stacked_heatmaps} shows sweeps over batch and perturbations for SST-2 on OPT-13B. In 80 iters, 1SPSA is able to beat MeZO (94.5\% vs. 91.4\% SST-2 test accuracy) at batch size 128 and 160 perturbations per step, which at micro batch size 16 is 205k forward-passes vs. 200k for MeZO. We attribute this increase in accuracy to the large learning rate $\lambda$ made possible with stable loss landscape measurements. Note, the optimal $\lambda$ found per run is orders of magnitude higher than the typical $\lambda$ used in post-training for LLMs. We find $\lambda=5e^{-4}$ to be stable at batch size 128 and 160 perturbations, while MeZO and BP train with $\lambda=10^{-6}$, allowing our step size to be 500$\times$ larger. We observe this while training Qwen3 as well as shown in \cref{tab:qwen_lrs}.

For our second observation, we find that tying $\lambda=\epsilon$ results in the most stable training as reported in \cref{tab:eps_ablation}. This is intuitive since $\epsilon$ measures the loss landscape $L(\theta)$ at a specific radius from $\theta_k$, and $\lambda$ determines how far to step $\theta_{k+1} = \theta_{k} - \lambda \hat{g}(\theta_k,\epsilon)$. If we set $\lambda>\epsilon$, we step farther than we have measured, and would potentially be unstable. If we set $\lambda<\epsilon$, we step shorter than we have measured, and would slow convergence in stable loss landscapes or would be unstable in ill-conditioned loss landscapes. Additionally, this is more numerically-stable as the terms cancel in the update rule.

For our third observations, we observe that more batch size and perturbations per step seems to continually increase convergence rate, but only improves final convergence value up to a point of diminishing returns. Surprisingly, the best test accuracy is not at the largest batch size (1024) and $n_{pert}$ (640) tested, but at batch size (128) and $n_{pert}$ (160). We attribute this to the lack of momentum to get out of local minima. As we have no momentum, we need something to shake out of local minima. Leaving some variance may be the key to allow this to happen. 

For our fourth observation, we find that training is unstable at the end of training. Training loss diverges at the end of training and does not recover. While we use a plateau-based learning rate schedule that cuts $\lambda$ and $\epsilon$ both by half after a lack of progress, this does not remedy the issue and admittedly has little to no effect on final performance. To understand the loss landscape we are optimizing, we plot an eight thousand perturbation histogram of our 3-point curvature estimate in \cref{fig:qwen3_curvature_histogram}. We observe that curvature can swing from $-40^8$ to $30^8$ and everywhere in between in the same optimization step. If we step in each direction with the same $\lambda$, we will be stepping too little in low curvature dimensions, and too far in high curvature dimensions. With this in mind, we seek a way to down-weight these high curvature dimensions to keep training stable.

\subsection{Incorporating Curvature into 1SPSA}
In convex optimization, preconditioning with the inverse Hessian $H^{-1}$ (Newton's method) corrects for ill-conditioning. In deep learning, a full $H$ is often unavailable. First-order methods like Adam approximate diagonal curvature in parameter space. Standard 2SPSA attempts to estimate global Hessian-vector products which is intractable and violates our memory constraint in our desiderata. While we cannot afford such methods that use $O(d)$ additionally memory for the solver, especially in such few optimization steps, perhaps there exists a curvature scheme that can be effective. Since we update along random directions vectors $z_i$, we can take insight from the Johnson-Lindenstrauss (JL) lemma~\cite{JohnsonLindenstrauss1984}, which suggests that geometry in high-dimensional space is preserved in random low-dimensional projections with some probability. In this way, we are randomly projecting our optimization problem onto a random subspace spanned by our perturbations. Since we can not practically precondition in the high-dimensional space, perhaps we can precondition the low-dimensional subspace and still improve convergence. We derive in section \cref{app:jl_curvature} an extension of JL showing a key result: if JL preserves geometry on a random perturbation set, then the same perturbation set also preserves the curvature terms (up to controlled distortions), justifying curvature reweighting as a stable perturbation-space preconditioner. 

With this in mind, we propose a simpler, cheaper curvature scheme: estimate the scalar curvature only along the perturbation directions $z_i$. We can achieve this with just one additional forward-pass per optimization step (shared across all perturbations), the clean forward-pass ($L(\theta)$), which sits in the middle of all of our central difference approximations. This evaluation is actually required already to track true training loss so this may be viewed as no additional compute. With this center point, we estimate a 3-point scalar curvature term $\hat{c_i}$ per perturbation direction $z_i$ from {$L(\theta - \epsilon z_i)$,$L(\theta)$, $L(\theta + \epsilon z_i)$}.

However, as neural loss landscapes are non-quadratic and heavy-tailed~\cite{ghorbani2019investigation}, a direct inverse-curvature step would be unstable if curvature is near zero (exploding step) or extremely large (vanishing step). By clipping the curvature estimate to a safe range, we maintain stability while exploiting local geometry. We desire a sub-linear response to extreme curvature: if curvature doubles, we do not necessarily want the step size to halve, as that might be too aggressive. This motivates a $\alpha$-saturated weighting scheme described below.

\section{Method: 1.5-SPSA}
\label{sec:method}
Building on this motivation, we present 1.5-SPSA. We combine the compute allocation budget found in our 1SPSA analysis with a cheap, robust curvature-informed preconditioner in perturbation space to improve convergence rate in ill-conditioned loss landscapes as per our desiderata. Critically, we maintain 1SPSA's inference-mode memory use as well. 

\subsection{Directional Curvature and $\alpha$-Saturation}
To implement this, we modify the 1SPSA estimator to include one clean loss evaluation $L(\theta)$ per step (shared across all perturbations). For each perturbation triplet using the standard central difference stencil~\cite{spall1992multivariate} $\{L(\theta+\epsilon z_i), L(\theta), L(\theta-\epsilon z_i)\}$, we can simultaneously estimate the gradient signal \textit{and} the scalar directional curvature:
\begin{equation}
\hat{c}_i = \frac{L(\theta+\epsilon z_i) - 2L(\theta) + L(\theta-\epsilon z_i)}{\epsilon^2} \approx z_i^\top \nabla^2 L(\theta) z_i
\end{equation}

However, the Hessian spectrum in neural networks is known to be heavy-tailed~\cite{ghorbani2019investigation,sagun2017empirical}. Random perturbations occasionally align with extremely sharp directions (high $\hat{c}_i$) or extremely flat ones ($\hat{c}_i \approx 0$). A naive Newton step $\propto 1/\hat{c}_i$ would be unstable~\cite{dauphin2014saddle}. To robustly handle this, we winsorize the curvature using an \textbf{$\alpha$-saturated weighting scheme}. We define a robust weight $w_i$:
\begin{equation}
w_i = \frac{1}{\max\left( \lambda_{\text{reg}},\, |\hat{c}_i|^\alpha \right)}
\end{equation}
where $\lambda_{\text{reg}}$ is a scalar regularization term (we use $\lambda_{\text{reg}}=1$), and $\alpha \in [0, 1]$ controls the aggressiveness of the preconditioning. $\alpha=1$ corresponds to a full diagonal Newton step in perturbation space, while $\alpha=0$ recovers standard 1SPSA. and we test $\alpha \in [10^{-5},10^{-3},10^{-1},0.5,1]$ and find $\alpha=0.1$ to be consistently optimal across architectures (OPT and Qwen), tasks and model sizes as show in \cref{app:alpha_sweep}.

\subsection{1.5-SPSA Update Rule}
Substituting these definitions back into 1SPSA, we arrive at the simple 1.5-SPSA update rule:
\begin{equation}
\Delta \theta = \frac{1}{2n} \sum_{i=1}^n \left( \frac{ L(\theta + \epsilon z_i) - L(\theta - \epsilon z_i) }{ \max\left( \lambda_{\text{reg}},\, |\hat{c}_i|^\alpha \right) } \right) z_i
\end{equation}

As discussed in \cref{sec:motivation}, we tie $\lambda=\epsilon$ so those terms cancel. 

The 1.5-SPSA update can also be written compactly as
\begin{equation}
\Delta \theta \;=\; -\eta \, Z\,W\,\Delta \ell,
\end{equation}
where 
\[\eta := \frac{1}{2n_{pert}},\]
\[
Z := [z_1\;\; z_2\;\; \cdots\;\; z_{n_{pert}} ],
\]
\[
W := \mathrm{diag}(w_1,\dots,w_{n_{pert}}),
\]
\[
\Delta \ell := (\Delta \ell_1,\dots,\Delta \ell_{n_{\text{pert}}})^\top,
\quad
\Delta \ell_i := L(\theta+\epsilon z_i) - L(\theta-\epsilon z_i).
\]

\subsection{System Design to Maintain our Desiderata}
To compute this update, we first observe that all of the loss values {$L(\theta), L(\theta + \epsilon z_i), L(\theta - \epsilon z_i)$} can be calculated in parallel allowing us to achieve wall-clock time per update theoretically faster than BP (as we have no backward pass). The wall-clock time per update is then limited only by the number of accelerators available in the cluster and the speed of the forward-pass. However, $z_i$ must be expressed three times so we have a compute vs. memory tradeoff to consider. Instantiating an $O(d)$ sized random perturbation from a seed can take considerable wall-clock time which can be reduced by caching $z_i$. However this would violate our inference-only memory use. We must re-instantiate $z_i$ three times as fast as possible. So we combine three innovations to maintain our desiderata:
\begin{itemize}
    \setlength{\itemsep}{0pt}
    \setlength{\parskip}{0pt}
    \setlength{\parsep}{0pt}
    \item \textbf{Bit-Packed perturbations}: Since $z_i$ are Rademacher, we instantiate them as 1 bit per parameter and only one layer block at a time (block-size $B$ configurable). For a 13B model with $B=1$, a perturbation is $\approx 1.6$ GB (unpacked bf16) but only $\approx 1.6$ GB / 16 = 100 MB packed.
    \item \textbf{Fused Kernels}: We implement custom fused CUDA/Triton kernels that unpack the perturbation bit, scale it by $\epsilon$ or the computed weight $w_i$, and add it to the weights in-place. This combined with bit-packing provides a 2.76$\times$ speedup over pytorch implementation as described in \cref{app:triton_code_bitpack}.
    \item \textbf{Seed-Based Distribution}: We use distributed training to parallelize the forward passes as outlined in \cref{alg:dist_15spsa}. We start by broadcasting seed scalars for each perturbation to the ranks. Each GPU generates its perturbation on the fly, calculates the forward-pass, and communicates back only a scalar loss minimizing communication overhead to only scalars. Rank 0 must then regenerate the perturbation, scale it appropriately and update the model. Once the update is completed, we need to communicate the updated model $\theta$ to all GPUs. 
\end{itemize}

\begin{algorithm}[!t]
\caption{1.5-SPSA\_STEP$(\theta,\epsilon,a,n,\alpha,\lambda_{\mathrm{reg}})$\label{alg:dist_15spsa}}
\begin{algorithmic}[1]
\STATE \textbf{Input:} $\theta\in\R^d$, $\epsilon>0$, accumulation $a$, $n_{pert}$, $\alpha$, $\lambda_{\mathrm{reg}}$
\STATE \textbf{Output:} updated parameters $\theta$
\STATE $r \leftarrow \textsc{dist.get\_rank}()$
\STATE \textsc{dist.broadcast}$(\theta,\text{src}=0)$
\IF{$r=0$}
  \STATE Sample seeds $S=\{s_i\}_{i=1}^{n_{pert}}$ and scatter across ranks
\ENDIF
\STATE \textsc{dist.scatter}$(S,\text{src}=0)$

\STATE \textbf{Compute center loss once:} $L_0 \leftarrow \sum_{t=1}^{a} \mathcal{L}(\theta,x_t)$

\FOR{$i=1$ to $n$}
  \IF{$r=i$}
  \STATE $z_i \leftarrow \textsc{Packedperturbation}(s_i)$
  \STATE $\theta \leftarrow \textsc{ApplyPackedperturbation}(\theta,\;+\epsilon,\;z_i)$
  \STATE $L_{+} \leftarrow \sum_{t=1}^{a} \mathcal{L}(\theta,x_t)$
  \STATE $\theta \leftarrow \textsc{ApplyPackedperturbation}(\theta,\;-2\epsilon,\;z_i)$
  \STATE $L_{-} \leftarrow \sum_{t=1}^{a} \mathcal{L}(\theta,x_t)$
  \STATE $\theta \leftarrow \textsc{ApplyPackedperturbation}(\theta,\;+\epsilon,\;z_i)$
  \STATE \textsc{dist.gather}$([L_{+},L_{-}],\text{dst}=0)$
  \ENDIF
\ENDFOR

\IF{$r=0$}
  \FOR{$i=1$ to $n$}
    \STATE $z_i \leftarrow \textsc{Packedperturbation}(s_i)$
    \STATE $\hat{c}_i \leftarrow (L_{+,i} - 2L_0 + L_{-,i})/\epsilon^2$
    \STATE $g_i \leftarrow (L_{+,i} - L_{-,i})$
    \STATE $w_i \leftarrow 1/\max(\lambda_{\mathrm{reg}},|\hat{c}_i|^\alpha)$
    \STATE $\gamma_i \leftarrow \dfrac{1}{2n}\, g_i\, w_i$
    \STATE $\theta \leftarrow \textsc{ApplyPackedperturbation}(\theta,\;-\gamma_i,\;z_i)$
  \ENDFOR
\ENDIF
\end{algorithmic}
\end{algorithm}

This results in training that uses nearly the same memory as inference and wall-clock per step that can be scaled up or down depending on the availability of compute. We outline our full implementation in \cref{alg:dist_15spsa}. In practice, we train on a single 8xA100 node and perturbation forward-passes are fully parallelized across 8 ranks, reducing the sequential forward-pass computations by 8$\times$ vs. full parallelization. Rank 0 sends seeds to all other ranks, and each rank applies its assigned perturbations locally, computes losses, and communicates only scalar loss values back to rank 0 to update $\theta$. After the update, the updated $\theta$ must be communicated to all ranks but this happens only tens of times with large batch size and $n_{pert}$. Additionally, its important to note that total cluster memory use actually scales by number of ranks (r) and micro batch size (m) and model size $|\theta|$ providing: $O(r \space m \space|\theta|)$ memory use, as is typical in Distributed Data Parallelism (DDP).  

\section{Experiments}
\label{sec:experiments}
This section reports results for all experiments conducted. 

\subsection{OPT-13B/30B Post-Training MeZO comparison}
First, we observe 1.5-SPSA for post-training LLMs on OPT-13B for direct comparison to MeZO. We also scale this up to OPT-30B to see how we compare to MeZO at larger model sizes. We emphasize that our BP baselines are drawn from standard post-training practice and are not exhaustively re-optimized for the extreme large-batch, few-step regime explored in this section. Our goal is not to claim universal superiority over backpropagation, but rather to demonstrate that 1.5-SPSA outperforms previous zero-order methods, and can achieve competitive or superior performance to BP in specific compute allocation as is done in the MeZO configuration.

\begin{table}[H]
\centering
\scriptsize
\setlength{\tabcolsep}{2pt}
\renewcommand{\arraystretch}{1.05}
\begin{tabular}{@{}lcccccc@{}}
\toprule
\textbf{Method} & \textbf{SST-2} & \textbf{RTE} & \textbf{BoolQ} & \textbf{WSC} & \textbf{WiC}   \\
\midrule
Zero-shot & 58.8 & 59.6 & 59.0 & 38.5 & 55.0  \\
ICL & 87.0 & 62.1 & 66.9 & 39.4 & 50.5   \\
LP & 93.4 & 68.6 & 59.3 & 63.5 & 60.2  \\
MeZO~\citep{malladi2022mezo} & 91.4 & 66.1 & 67.6 & 63.5 & 61.1  \\
MeZO (LoRA) & 89.6 & 67.9 & 73.8 & 64.4 & 59.7   \\
MeZO (prefix) & 90.7 & 70.8 & 73.1 & 60.6 & 59.9 \\
BP+Adam & 92.0 & 70.8 & \textbf{77.1} & 63.5 & \textbf{70.1}   \\
\midrule
1SPSA   & 94.2 & 63.3 & 76.5 & 65.4 & 61.8  \\
1.5-SPSA  & \textbf{94.5} & \textbf{77.7} & 76.5 & \textbf{71.2} & 61.9  \\
\bottomrule
\end{tabular}
\caption{
OPT-13B post-training accuracy results on common GLUE/SuperGLUE tasks. Rows above the midrule are baselines. All 1SPSA and 1.5-SPSA were trained with less than 300 optimization steps. 1.5-SPSA has superior performance. 
}
\label{tab:opt13b_tasks}
\end{table}

\begin{table}[H]
\centering
\scriptsize
\setlength{\tabcolsep}{1.5pt}
\renewcommand{\arraystretch}{1.05}
\begin{tabular}{@{}lcccccc@{}}
\toprule
\textbf{Method} & \textbf{SST-2} & \textbf{RTE} & \textbf{BoolQ} & \textbf{WSC} & \textbf{WiC}  \\
\midrule
MeZO / Prefix & 90.6 & 72.6 & 73.5 & 63.5 & 59.1   \\
\midrule
1SPSA & 94.0 & 69.0 & 73.0 & \textbf{67.5} & \textbf{59.9}  \\
\midrule
1.5-SPSA   & \textbf{94.5} & \textbf{77.0} & \textbf{74.0} & \textbf{67.5} & 59.3  \\ 
\bottomrule
\end{tabular}
\caption{
OPT-30B post-training accuracy results on common GLUE/SuperGLUE tasks. All 1SPSA and 1.5-SPSA were trained with less than 300 optimization steps. 1.5-SPSA proves superior performance. 
}
\label{tab:OPT-30B_tasks}
\end{table}

\subsection{Qwen3 Post-Training}

To ensure these results are not just a property of OPT, we train on a more modern architecture. We train Qwen3-8B on the same benchmarks as the OPT trials, Also, we train Qwen3-1.7B \citep{yang2025qwen3} on Stable ToolBench \citep{guo2024stabletoolbench} to see how it performs on a different task vs. those tested in MeZO. 1.5-SPSA outperforms 1SPSA by 0.4\%, and BP with Adam by 2\%. Additionally, we perform a sweep of learning rates $\lambda$s to demonstrate the stability of 1.5-SPSA over 1SPSA and MeZO at much higher learning rates which allows for larger jumps in parameter space. 

\begin{table}[H]
\centering
\scriptsize
\setlength{\tabcolsep}{2pt}
\renewcommand{\arraystretch}{1.05}
\begin{tabular}{@{}lcccccc@{}}
\toprule
\textbf{Method} & \textbf{SST-2} & \textbf{RTE} & \textbf{BoolQ} & \textbf{WSC} & \textbf{WiC}  \\
\midrule
1SPSA       & 94.5 & \textbf{88.5} & 85.7 & 75.0  & 64.6 \\
\midrule
1.5-SPSA   & \textbf{94.7} & 88.0 & \textbf{86.1} & \textbf{80.8} & \textbf{71.2}   \\
\bottomrule
\end{tabular}
\caption{
Qwen3-8B post-training accuracy results on common GLUE/SuperGLUE tasks. All 1SPSA and 1.5-SPSA were trained with less than 300 optimization steps. 1.5-SPSA proves superior performance. 
}
\label{tab:qwen3_8b_tasks}
\end{table}

\begin{table}[H]
\centering
\scriptsize
\setlength{\tabcolsep}{3pt}
\begin{tabular}{@{}lcccccc@{}}
\toprule
\textbf{Method} & \textbf{lr=1e-3} & \textbf{lr=5e-4} & \textbf{lr=1e-4} & \textbf{lr=5e-5} & \textbf{lr=1e-5} & \textbf{lr=5e-6} \\
\midrule
\shortstack{BP+Adam \\ \hfill}         & \shortstack{72.0 \\ (70)} & \shortstack{70.2 \\ (80)} & \shortstack{77.0 \\ (285)} & \shortstack{77.0 \\ (284)} & \shortstack{70.2 \\ (279)} & \shortstack{65.3 \\ (275)} \\
\midrule
\shortstack{1SPSA \\ \hfill}      & \shortstack{$diverge$ \\ \hfill }     & \shortstack{$diverge$ \\ \hfill }      & \shortstack{78.6 \\ (38)}     & \shortstack{78.5 \\ (38)}     & \shortstack{77.2 \\ (44)}     & \shortstack{77.2 \\ (45)} \\
\midrule
\shortstack{1.5-SPSA \\ \hfill}  & \shortstack{77.0 \\ (24)}    & \shortstack{ \textbf{79.0} \\ \textbf{(24)} }    & \shortstack{78.0 \\ (37)}     & \shortstack{77.6 \\ (37)}     & \shortstack{77.1 \\ (39)}     & \shortstack{77.0 \\ (44)} \\
\bottomrule
\end{tabular}
\caption{Qwen3-1.7B post-training accuracy results on Stable ToolBench to teach tool-use. Format: Test Acc (on top), optimization steps to converge (on bottom). All 1SPSA and 1.5-SPSA were trained with less than 300 optimization steps. We show 1.5-SPSA outperforms 1SPSA and BP with Adam on Qwen3. Additionally, we show that 1.5-SPSA permits larger learning rates which improves convergence rate.}
\label{tab:qwen_lrs}
\end{table}

\subsection{Convex Optimization: Toy Stiff Paraboloid}
We hypothesize that the benefit of curvature preconditioning scales with the condition number of the local Hessian. To test this, we construct a synthetic stiff paraboloid objective $J(x) = x^\top R^\top \text{diag}(k, 1, \dots, 1) R x$, where $R$ is a random orthonormal matrix and $k$ controls the condition number $\kappa=\frac{k}{1}$. We vary $k \in \{1, 5, 10, 50, 100, 500, 1000\}$ and measure the number of optimization steps required to converge to the global minimum $x^*$. As illustrated in \cref{fig:paraboloid_runs}, 1.5-SPSA matches 1SPSA's convergence rate at low $k$ (well-conditioned) but converges significantly faster as $k$ increases (ill-conditioned), confirming our hypothesis.

\begin{figure}[!t]
    \centering
    \includegraphics[width=0.48\textwidth]{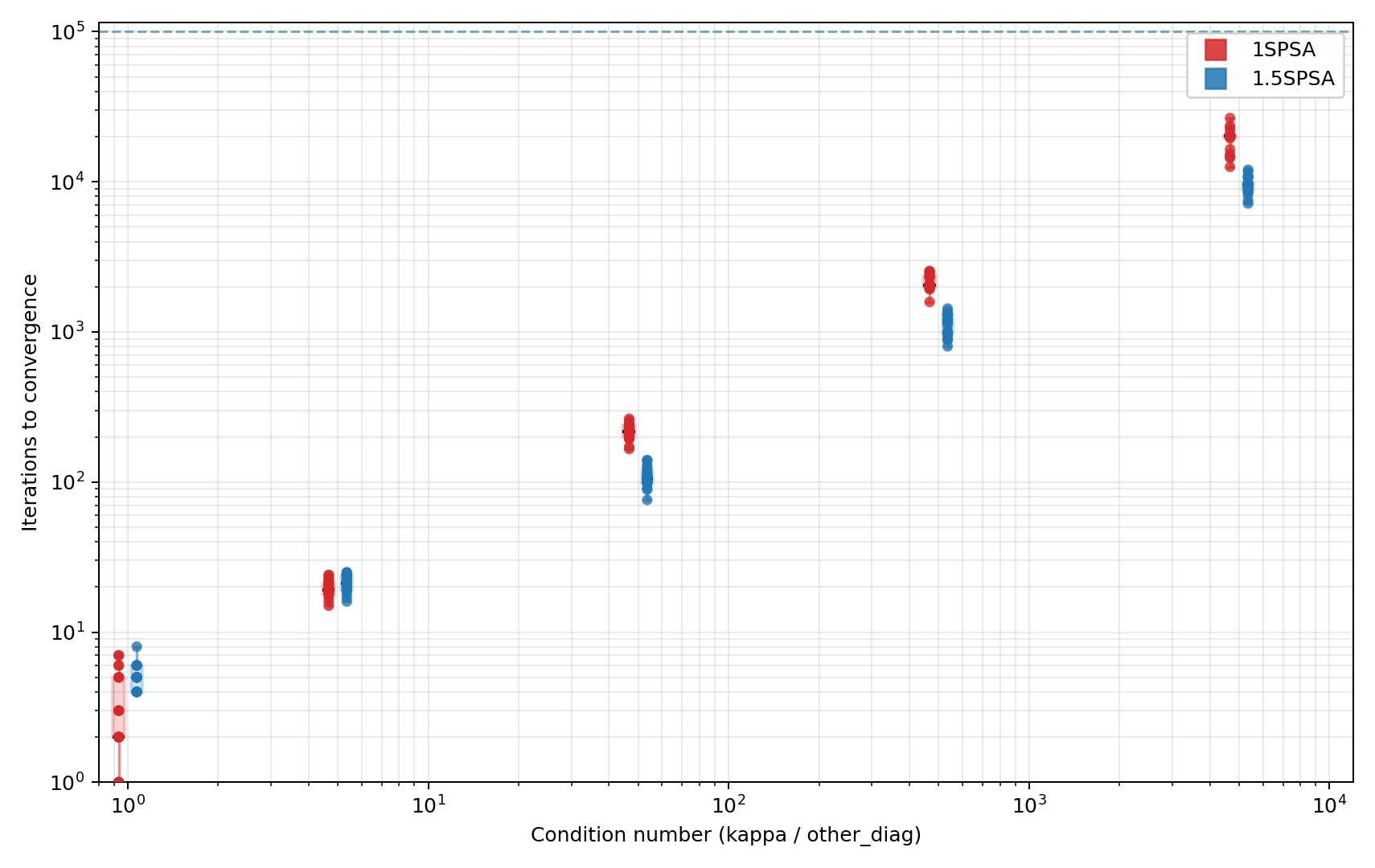}
    \caption{Comparing convergence rates on a stiff paraboloid $x^\top R^\top \text{diag}(k, 1, \dots, 1) R x$ at varying condition numbers ($\kappa = k/1$). 10 seeds per $\kappa \times solver$. Random R and perturbations to compare 1SPSA vs. 1.5-SPSA convergence rates. As $\kappa$ grows, 1.5-SPSA converges faster (up to 7$\times$ on average).}
    \label{fig:paraboloid_runs}
\end{figure}

\subsection{Non-Convex Optimization: DNC overfitting stress tests across scale}
Differentiable Neural Computers (DNCs), as introduced by \citep{graves2016hybrid}, are a class of Recurrent Neural Networks (RNNs) that are notoriously difficult to train as they include an external memory that can be read from and written to at each step as well as a hidden memory. We check to see if 1.5-SPSA can outperform 1SPSA and BPTT in convergence rate on a fixed batch until loss reaches a near-zero threshold. We report convergence rate (steps-to-0) across model scales up to 1B parameters, and also show the impact of increasing $n_{pert}$. As shown in ~\cref{fig:overfit_15spsa_vs_1spsa}, while overfitting a DNC model, the larger we make $n_{pert}$, and as we increase the model size, the faster convergence rate. We observe that 1.5-SPSA converges much faster than 1SPSA (by as much as 6$\times$ less steps) and in many cases faster the BPTT at larger perturbations per step.

\begin{figure}[!t]
\centering
\begin{tikzpicture}
\begin{loglogaxis}[
  title={DNC Steps-to-Zero Loss},
  title style={font=\small, yshift=2pt},
  xlabel={Model size (parameters)},
  ylabel={Steps to zero loss},
  xmajorgrids=true,
  ymajorgrids=true,
  grid style=dashed,
  width=8.3cm,
  height=6.0cm,
  xtick={3e5,1e6,4e6,1.7e7,6.7e7,2.7e8,1.1e9},
  xticklabels={300k,1M,4M,17M,68M,270M,1.1B},
  legend style={
    at={(0.5,-0.28)},
    anchor=north,
    draw=none,
    font=\scriptsize,
    row sep=-2pt,
    column sep=4pt,
    cells={anchor=west},
    inner xsep=1pt,
    inner ysep=1pt,
  },
  legend columns=3,
  xlabel style={font=\small},
  ylabel style={font=\small},
]

\addplot+[mark=*, thick, color=black] table {
304357 2870
1132901 1180
4362853 480
17114213 200
67782757 90
269783141 37
};
\addlegendentry{BPTT}

\addplot+[mark=triangle*, thick, color=blue, solid] table {
304357 4635
1132901 2514
4362853 1386
17114213 660
67782757 525
269783141 672
1076437093 809
};
\addlegendentry{1SPSA @ 8}

\addplot+[mark=square*, thick, color=red, solid] table {
304357 1180
1132901 600
4362853 250
17114213 180
67782757 120
269783141 80
1076437093 110
};
\addlegendentry{1SPSA @ 96}

\addplot+[mark=diamond*, thick, color=green!60!black, solid] table {
304357 150
1132901 103
4362853 52
17114213 22
67782757 16
269783141 20
1076437093 24
};
\addlegendentry{1SPSA @ 512}

\addplot+[mark=triangle*, very thick, color=blue, densely dotted, mark options={solid}] table {
304357 344
1132901 319
4362853 332
17114213 363
67782757 281
269783141 294
1076437093 394
};
\addlegendentry{1.5-SPSA @ 8}

\addplot+[mark=square*, very thick, color=red, densely dotted, mark options={solid}] table {
304357 94
1132901 51
4362853 47
17114213 34
67782757 27
269783141 39
1076437093 40
};
\addlegendentry{1.5-SPSA @ 96}

\addplot+[mark=diamond*, very thick, color=green!60!black, densely dotted, mark options={solid}] table {
304357 80
1132901 60
4362853 42
17114213 22
67782757 16
269783141 11
1076437093 9
};
\addlegendentry{1.5-SPSA @ 512}

\end{loglogaxis}
\end{tikzpicture}
\caption{
DNC overfitting stress test: steps to reach a near-zero loss for 7 different model sizes up to 1B. 1.5-SPSA reduces steps over 1SPSA at the same perturbation count. More perturbation count improves convergence rate. Larger models converge faster. BPTT cannot run the 1.1B model in this setup due to GPU memory limits.
}
\label{fig:overfit_15spsa_vs_1spsa}
\end{figure}

\section{Discussion and Future Work}

\paragraph{Limitations and scope of comparison to backpropagation.}
A limitation of this study is that we do not exhaustively retune backpropagation (BP) baselines under the same extreme compute-allocation regimes explored for 1SPSA and 1.5-SPSA. While such tuning is possible in principle, it typically requires substantial optimizer state, gradient checkpointing, or backward-pass memory, which conflicts with our desiderata. For example, we may achieve considerable reduction in memory using gradient checkpointing at the cost of additional compute. We therefore interpret our results not as a claim that 1.5-SPSA universally outperforms BP on memory, compute or accuracy, but as evidence that when compute is allocated normally, 1.5-SPSA is highly competitive.

\paragraph{Choice of batch size and $n_{pert}$}

We perform a detailed analysis of curvature, batch noise and perturbation noise in \cref{sec:empirical_analysis} to help us understand the relationship between our two most important hyperparameters and gradient noise. We can combat batch noise and perturbation noise by increasing batch size and $n_{pert}$, trading off compute for stability, but how much is enough? We find that stability is related to the median $\hat{g}(\theta)$. We analyze each separately. First, in \cref{fig:batch_noise_var}, which shows when batch noise reduces below the median gradient, we see that convergence does well once our signal to noise ratio approaches and surpasses 1. This is the point when we start to see stable convergence in \cref{fig:spsa_stacked_heatmaps}. Similarly, in \cref{fig:perturbation_variance}, which shows when perturbation noise reduces below the median gradient, we see that convergence does well once our signal to noise ratio approaches and surpasses 1. Rather than brute-force search through these hyperparameters, we can use this technique to find the minimally sufficient batch size and $n_{pert}$ that will surpass SNR=1. 

\paragraph{Choice of $\lambda$.}
The learning rate $\lambda$ plays a critical role in realizing the benefits of 1SPSA and 1.5-SPSA. If $\lambda$ is too small, curvature scaling becomes ineffective and instability can reappear. While we brute force search for $\lambda$, a more systematic approach (linear, binary, quadratic fit search) may significantly reduce tuning overhead and improve robustness, as well as offer insight into a better learning rate schedule than plateau decay.

\paragraph{Choice of $\alpha$.}
We find that $\alpha = 0.1$ is sufficiently optimal throughout tasks and architectures, which yields mild curvature weighting as shown in Appendix~\Cref{app:alpha_sweep}. During training, as shown in \cref{fig:spaghetti}, we observe 3-point curvatures as large as $10^8$. Note, $|\hat{c}|^{\alpha=0.1} \approx 6.3$. So under our weighting scheme, we down-weight this direction 6$\times$ more than zero-curvature direction. This mild down-weighting is sufficient to remove instability. Furthermore, these stability gains compound across iterations. 

\paragraph{Batch-normalized curvature normalization.}
Our current curvature scaling operates on per-perturbation curvature estimates without normalization across the batches and/or perturbations. As shown in \cref{fig:qwen3_curvature_histogram}, it may be appropriate to normalize the curvature per batch, or per perturbation, or per step. Exploring batch-normalized or relative curvature estimates is a natural extension and may further stabilize updates without introducing additional hyper-parameters or memory use. We leave this to future work.

\section{Conclusion}

We presented 1.5-SPSA, a memory-efficient zero-order optimizer that bridges the gap between first-order and zero-order training efficacy. By maintaining constant training compute, we showed that 1SPSA and 1.5-SPSA perform better with larger effective batch sizes and many perturbations, not many steps. To tame the resulting curvature variance, algorithm 1.5-SPSA adds a single clean forward-pass to precondition updates in perturbation space. This yields State-of-the-Art results on OPT post-training for ZOO with orders of magnitude less compute compared to prior ZOO methods.

\clearpage
\bibliographystyle{icml2026}
\bibliography{references}  %

\begin{thebibliography}{19}
\providecommand{\natexlab}[1]{#1}
\providecommand{\url}[1]{\texttt{#1}}
\expandafter\ifx\csname urlstyle\endcsname\relax
  \providecommand{\doi}[1]{doi: #1}\else
  \providecommand{\doi}{doi: \begingroup \urlstyle{rm}\Url}\fi

\bibitem[Chen et~al.(2024)Chen, Zhang, Jia, Diffenderfer, Liu, Parasyris,
  Zhang, Zhang, Kailkhura, and Liu]{chen2024deepzero}
Chen, A., Zhang, Y., Jia, J., Diffenderfer, J., Liu, J., Parasyris, K., Zhang,
  Y., Zhang, Z., Kailkhura, B., and Liu, S.
\newblock Deepzero: Scaling up zeroth-order optimization for deep model
  training.
\newblock In \emph{International Conference on Learning Representations
  (ICLR)}, 2024.

\bibitem[Chen et~al.(2019)Chen, Liu, Xu, Li, Lin, Hong, and
  Cox]{chen2019zo_adamm}
Chen, X., Liu, S., Xu, K., Li, X., Lin, X., Hong, M., and Cox, D.
\newblock Zo-adamm: Zeroth-order adaptive momentum method for black-box
  optimization.
\newblock In \emph{Advances in Neural Information Processing Systems}, 2019.

\bibitem[Choromanski et~al.(2019)Choromanski, Pacchiano, Parker-Holder, Tang,
  and Sindhwani]{cottrell2020asebo}
Choromanski, K., Pacchiano, A., Parker-Holder, J., Tang, Y., and Sindhwani, V.
\newblock From complexity to simplicity: Adaptive es-active subspaces for
  blackbox optimization.
\newblock \emph{arXiv preprint arXiv:1912.01255}, 2019.

\bibitem[Dauphin et~al.(2014)Dauphin, Pascanu, Gulcehre, Cho, Ganguli, and
  Bengio]{dauphin2014saddle}
Dauphin, Y.~N., Pascanu, R., Gulcehre, C., Cho, K., Ganguli, S., and Bengio, Y.
\newblock Identifying and attacking the saddle point problem in
  high-dimensional non-convex optimization.
\newblock In \emph{Advances in Neural Information Processing Systems}, 2014.

\bibitem[Gautam et~al.(2024)Gautam, Park, Zhou, Raman, and
  Ha]{gautam2024variancereducedzerothordermethodsfinetuning}
Gautam, T., Park, Y., Zhou, H., Raman, P., and Ha, W.
\newblock Variance-reduced zeroth-order methods for fine-tuning language
  models, 2024.
\newblock URL \url{https://arxiv.org/abs/2404.08080}.

\bibitem[Ghorbani et~al.(2019)Ghorbani, Krishnan, and
  Xiao]{ghorbani2019investigation}
Ghorbani, B., Krishnan, S., and Xiao, Y.
\newblock An investigation into neural net optimization via hessian eigenvalue
  density.
\newblock \emph{arXiv preprint arXiv:1901.10159}, 2019.

\bibitem[Graves et~al.(2016)Graves, Wayne, Reynolds, Harley, Danihelka,
  Grabska-Barwinska, Gomez, Grefenstette, Ramalho, Agapiou,
  et~al.]{graves2016hybrid}
Graves, A., Wayne, G., Reynolds, M., Harley, T., Danihelka, I.,
  Grabska-Barwinska, A., Gomez, S., Grefenstette, E., Ramalho, T., Agapiou, J.,
  et~al.
\newblock Hybrid computing using a neural network with dynamic external memory.
\newblock \emph{Nature}, 538\penalty0 (7626):\penalty0 471--476, 2016.
\newblock \doi{10.1038/nature20101}.
\newblock URL \url{https://doi.org/10.1038/nature20101}.

\bibitem[Guo et~al.(2024)Guo, Cheng, Wang, Liang, Qin, Li, Liu, Sun, and
  Liu]{guo2024stabletoolbench}
Guo, Z., Cheng, S., Wang, H., Liang, S., Qin, Y., Li, P., Liu, Z., Sun, M., and
  Liu, Y.
\newblock Stabletoolbench: Towards stable large-scale benchmarking on tool
  learning of large language models, 2024.
\newblock URL \url{https://arxiv.org/abs/2403.07714}.

\bibitem[Johnson \& Lindenstrauss(1984)Johnson and
  Lindenstrauss]{JohnsonLindenstrauss1984}
Johnson, W.~B. and Lindenstrauss, J.
\newblock Extensions of {Lipschitz} mappings into a {Hilbert} space.
\newblock In Beals, R., Beck, A., Bellow, A., and Hajian, A. (eds.),
  \emph{Conference in Modern Analysis and Probability}, volume~26 of
  \emph{Contemporary Mathematics}, pp.\  189--206. American Mathematical
  Society, Providence, RI, 1984.
\newblock \doi{10.1090/conm/026/737400}.

\bibitem[Kingma \& Ba(2015)Kingma and
  Ba]{kingma2017adammethodstochasticoptimization}
Kingma, D.~P. and Ba, J.
\newblock Adam: A method for stochastic optimization.
\newblock In \emph{International Conference on Learning Representations
  (ICLR)}, 2015.

\bibitem[Malladi et~al.(2023)Malladi, Gao, Nichani, Damian, Lee, Chen, and
  Arora]{malladi2022mezo}
Malladi, S., Gao, T., Nichani, E., Damian, A., Lee, J.~D., Chen, D., and Arora,
  S.
\newblock Fine-tuning language models with just forward passes.
\newblock In \emph{Advances in Neural Information Processing Systems}, 2023.
\newblock URL \url{https://arxiv.org/abs/2305.17333}.
\newblock Corrected from early arXiv ID to NeurIPS 2023 version.

\bibitem[Rubinstein \& Kroese(2004)Rubinstein and
  Kroese]{RubinsteinKroese2004CrossEntropy}
Rubinstein, R.~Y. and Kroese, D.~P.
\newblock \emph{The {Cross-Entropy} Method: A Unified Approach to Combinatorial
  Optimization, {Monte-Carlo} Simulation and Machine Learning}.
\newblock Information Science and Statistics. Springer, New York, NY, 2004.
\newblock \doi{10.1007/978-1-4757-4321-0}.

\bibitem[Sagun et~al.(2017)Sagun, Evci, Guney, Dauphin, and
  Bottou]{sagun2017empirical}
Sagun, L., Evci, U., Guney, V.~U., Dauphin, Y., and Bottou, L.
\newblock Empirical analysis of the hessian of over-parametrized neural
  networks.
\newblock \emph{arXiv preprint arXiv:1706.04454}, 2017.

\bibitem[Salimans et~al.(2017)]{salimans2017evolution}
Salimans, T. et~al.
\newblock Evolution strategies as a scalable alternative to reinforcement
  learning.
\newblock \emph{arXiv}, 2017.

\bibitem[Spall(1992)]{spall1992multivariate}
Spall, J.~C.
\newblock Multivariate stochastic approximation using a simultaneous
  perturbation gradient approximation.
\newblock \emph{IEEE Transactions on Automatic Control}, 37\penalty0
  (3):\penalty0 332--341, 1992.
\newblock \doi{10.1109/9.119632}.

\bibitem[Spall(1997)]{spall1997accelerated}
Spall, J.~C.
\newblock Accelerated second-order stochastic optimization using only function
  measurements.
\newblock In \emph{Proceedings of the 36th IEEE Conference on Decision and
  Control}, volume~2, pp.\  1417--1424. IEEE, 1997.
\newblock \doi{10.1109/CDC.1997.657661}.

\bibitem[Wang et~al.(2024)Wang, Shen, Ding, Xue, Liu, and
  Ding]{wang2024simultaneous}
Wang, F., Shen, L., Ding, L., Xue, C., Liu, Y., and Ding, C.
\newblock Simultaneous computation and memory efficient zeroth-order optimizer
  for fine-tuning large language models.
\newblock \emph{arXiv preprint arXiv:2410.09823}, 2024.

\bibitem[Wierstra et~al.(2008)]{wierstra2008nes}
Wierstra, D. et~al.
\newblock Natural evolution strategies.
\newblock \emph{Journal of Machine Learning Research}, 2008.

\bibitem[Yang et~al.(2025)]{yang2025qwen3}
Yang, A. et~al.
\newblock Qwen3: The next generation of unified large language models.
\newblock \emph{arXiv preprint arXiv:2505.09388}, 2025.

\end{thebibliography}

\newpage
\appendix

\section{Appendix}
\subsection{Empirical Analysis on Qwen3-8B Loss Landscape}
In this section, we analyze the Qwen3-8B Loss Landscape subject to SST-2. 

\label{sec:empirical_analysis}

\subsubsection{1-D Loss Landscape}

We first attempt to understand the loss landscape. We take random directions $z_i$ and plot 40 points along the that direction. We do this 100 times and show the line plots in \cref{fig:spaghetti}. As you can, the lines are very non-convex, with high variance in curvature. 

\begin{figure}[H]
    \centering
    \includegraphics[width=0.48\textwidth]{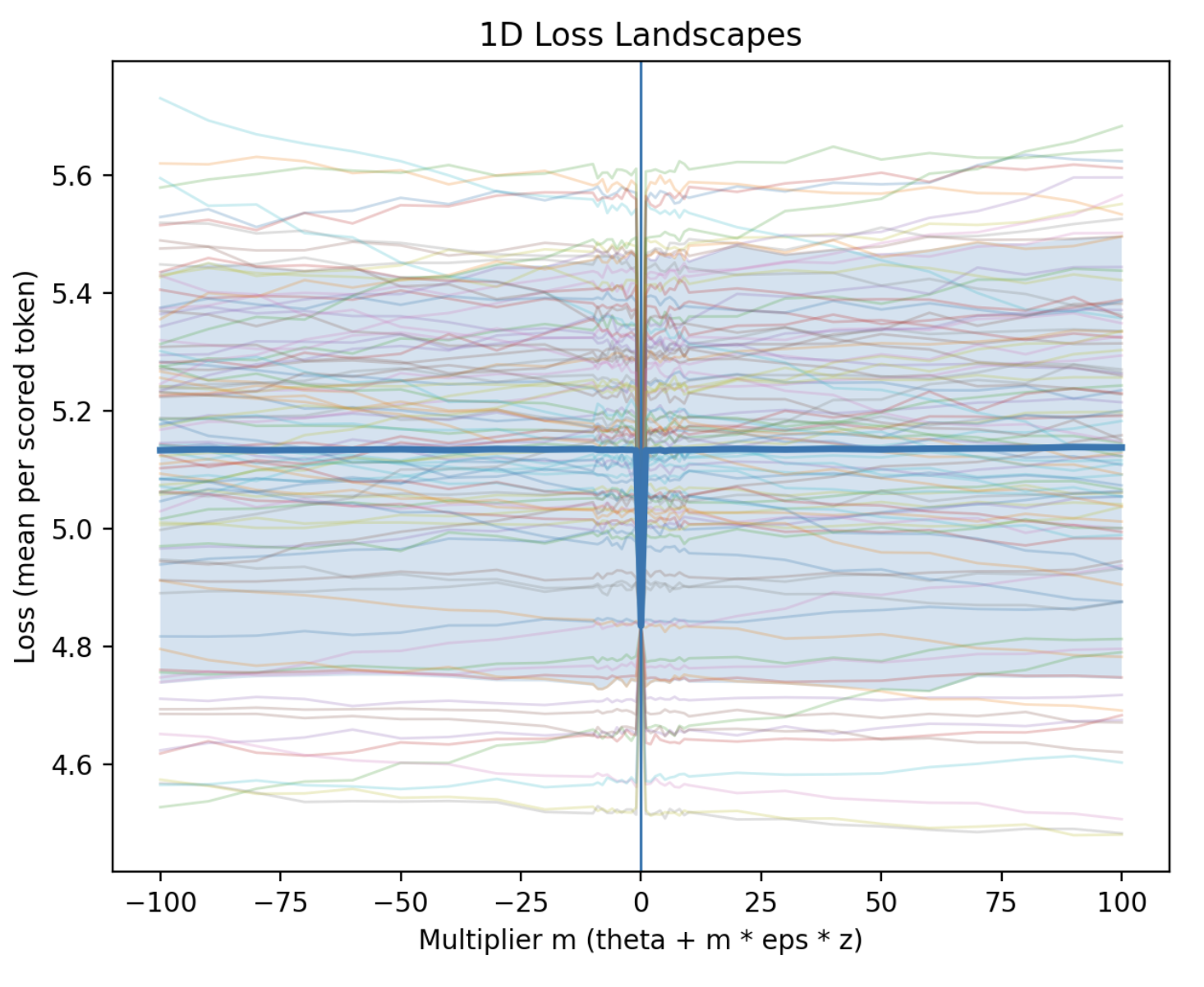}
    \caption{We perturb Qwen3-8B in random directions and plot the 1-D loss landscape at 40-points (SST-2, batch size = 256). The resulting ``spaghetti plot" shows an \textbf{indefinite and ill-conditioned Hessian}: local curvature ranges from sharply negative to sharply positive (up to $10^8$). }
    \label{fig:spaghetti}
\end{figure}

\subsubsection{3-point Curvature Histogram}
Next, we attempt to characterize the distribution of our 3-point curvature. We probe eight thousand random directions and calculate $\hat{c}$ as defined above and plot the histogram in \cref{fig:qwen3_curvature_histogram}. As shown, the values can range by eight orders of magnitude. This demonstrates that our loss function is very ill-conditioned and suggests preconditioning would be useful for faster convergence. 

\begin{figure}[H]
    \centering
    \includegraphics[width=0.48\textwidth]{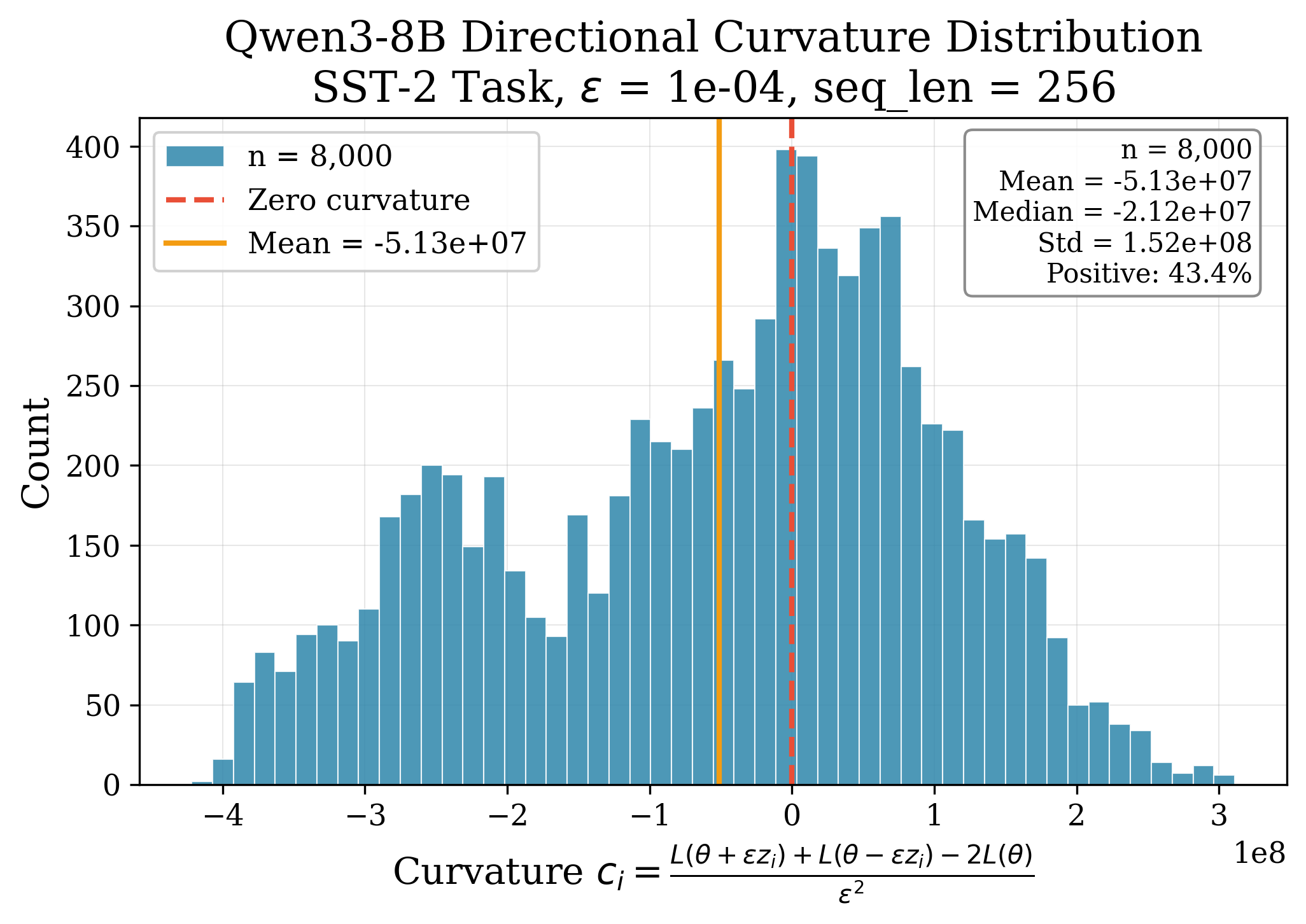}
    \caption{ \textbf{Histogram of 3-point curvature of Qwen3-8B loss landscape against SST-2.} We see values ranging from $-40^8$ to $30^8$ proving the post-training loss is ill-conditioned. We use batch size = 128, sequence length = 256, and $\epsilon=10^{-4}$ for eight thousand random perturbation directions. }
    \label{fig:qwen3_curvature_histogram}
\end{figure}

\subsubsection{Batch Noise Analysis}
Next, we want to understand the impact of batch size on our gradient estimate. Intuitively, we know we can decrease the noise in our gradient estimate with larger batch size, but as a practitioner, we still need to know how big is big enough for us to achieve stable convergence. We plot the distribution of our gradient estimate at different batch sizes in \cref{fig:batch_noise_boxplot}. Additionally, in \cref{fig:batch_noise_var}, we also plot the median gradient as well for us to get a sense of the signal we are trying to discern from the noise. The variance in our gradient estimate meets our median gradient around batch size 128 to 256 providing our target range for optimization. We also find that batch size less than this amount leads to unstable training.   
\begin{figure}[H]
    \centering
    \includegraphics[width=0.48\textwidth]{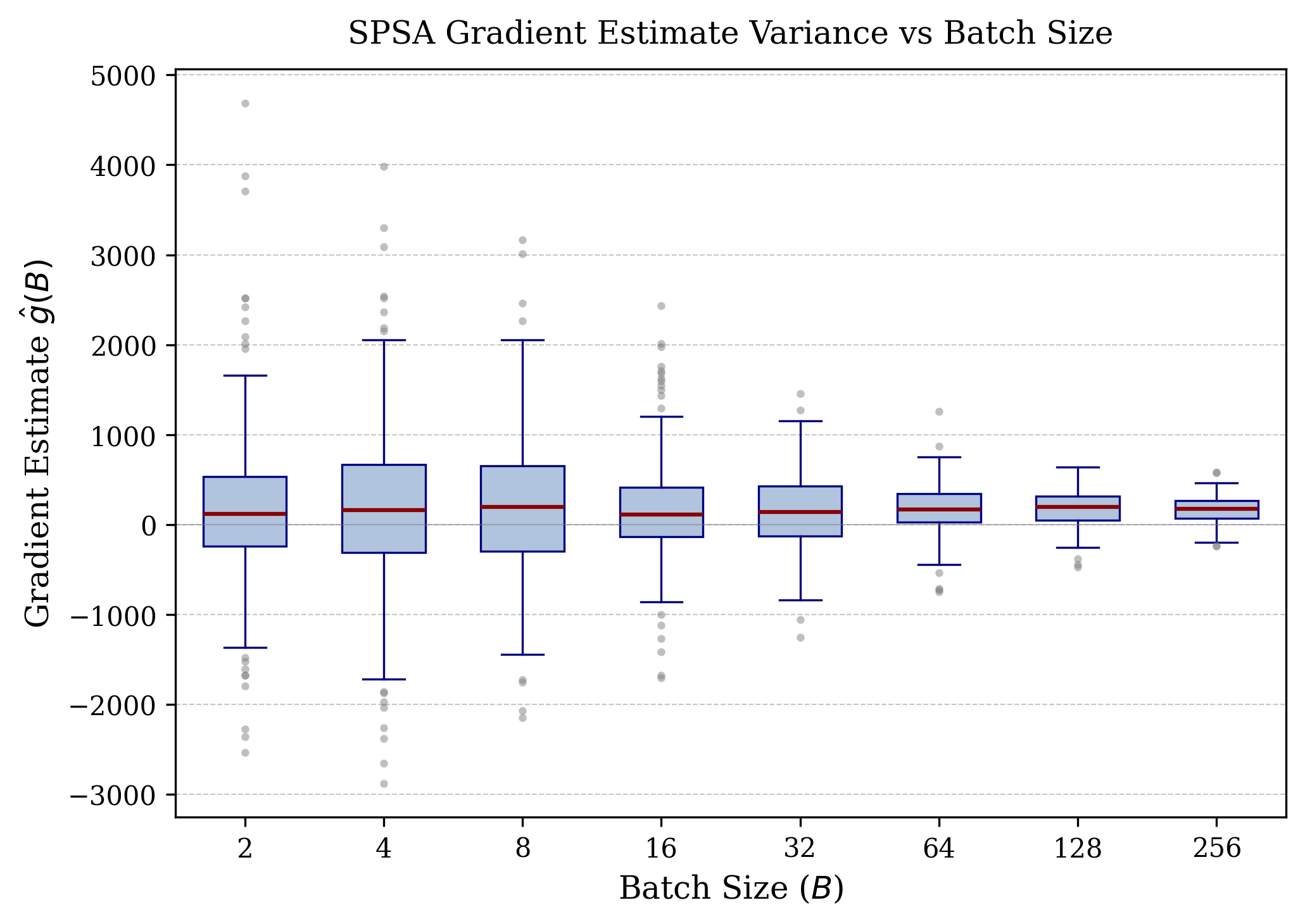}
      \caption{                                     
  \textbf{Box plot showing the distribution of gradient estimates for different batch size. } 
  We measure the finite difference gradient estimate using a Qwen3-8B model on the SST-2 sentiment classification task.                  
  A \emph{single} Rademacher perturbation vector $z \in \{-1, +1\}^d$ is sampled once (seed${}=0$) and held fixed across all measurements.                                                         
  For each batch size $B \in \{2, 4, 8, 16, 32, 64, 128, 256\}$, we sample $n=200$ independent batches $B_j$ from the training set and compute $\hat{g}(B_j)$ using $\epsilon = 10^{-4}$.          
  The same perturbation $z$ and perturbed models $\theta \pm \epsilon z$ are used for every batch; only the data batch varies.  
  } 
    \label{fig:batch_noise_boxplot}
\end{figure}

\begin{figure}[H]
    \centering
    \includegraphics[width=0.48\textwidth]{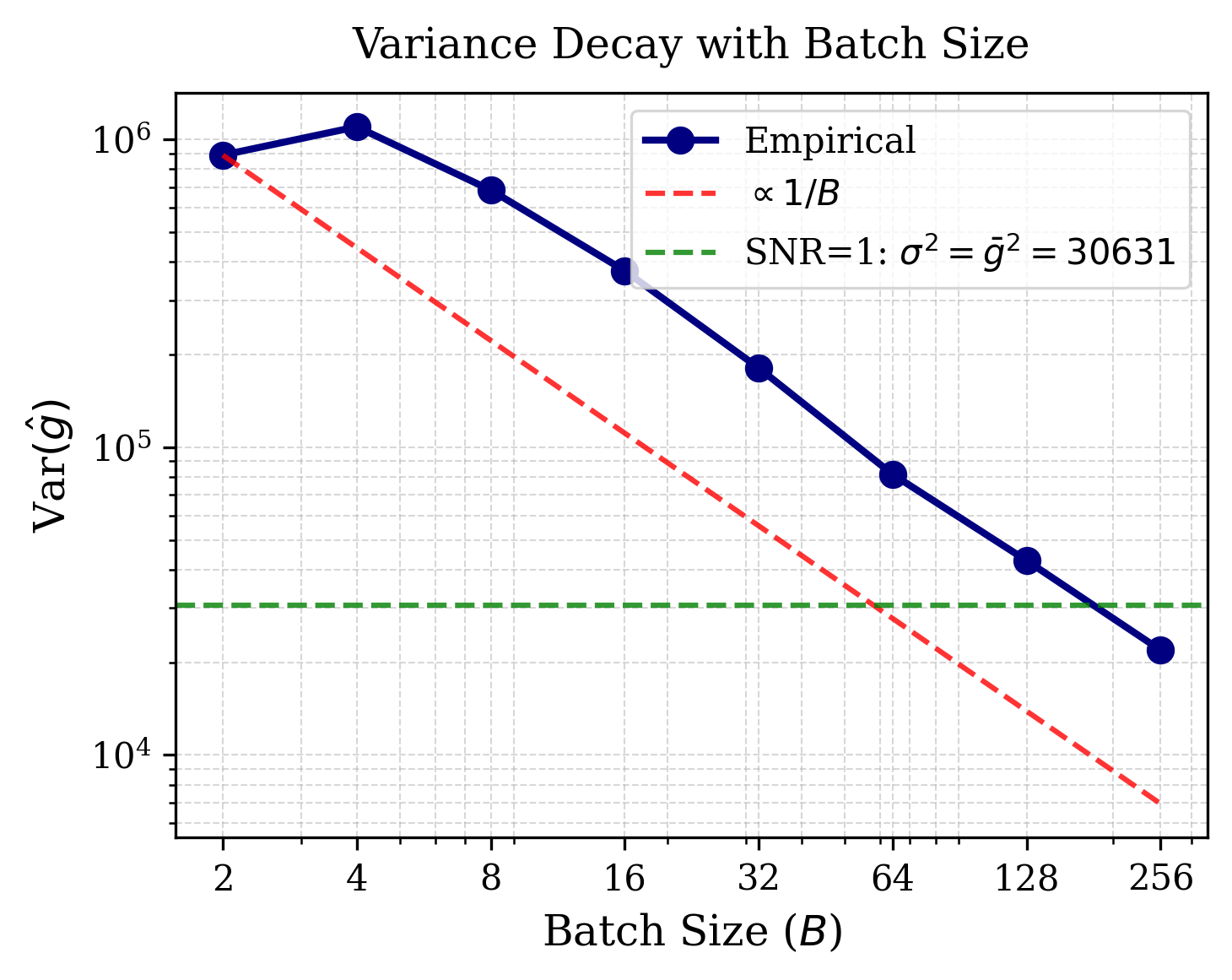}
    \caption{Empirical variance follows the expected $O(1/B)$ decay (red dashed line), consistent with the central limit theorem applied to mini-batch loss averaging.             
  The green dashed line indicates the SNR${}=1$ threshold where the gradient signal magnitude equals the noise standard deviation. }
    \label{fig:batch_noise_var}
\end{figure}

\subsubsection{Perturbation Noise Analysis}

Next, we want to understand the impact of perturbation noise on our gradient estimate the same way we did for batch noise. Intuitively, we know we can decrease the noise in our gradient estimate with larger perturbations, but as a practitioner, we still need to know how big is big enough for us to achieve stable convergence. We plot the distribution of our gradient estimate at different perturbations per step in \cref{fig:perturbation_boxplot}. Additionally, in \cref{fig:perturbation_variance}, we also plot the median gradient as well for us to get a sense of the signal we are trying to discern from the noise. The variance in our gradient estimate meets our median gradient around perturbations 256. 

\begin{figure}[H]
    \centering
    \includegraphics[width=0.48\textwidth]{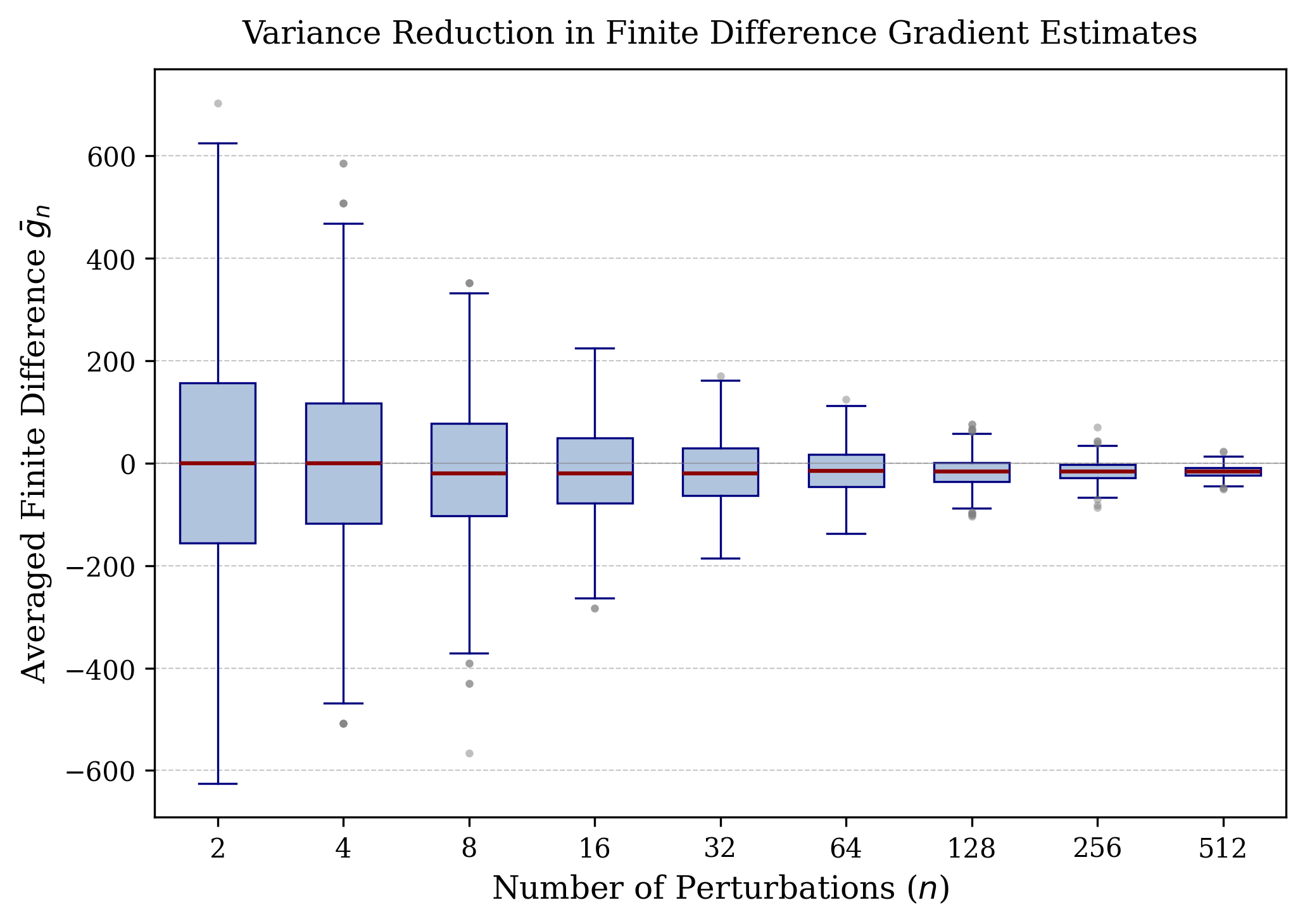}
\caption{ \textbf{Variance reduction in finite difference gradient estimates as a function of the number of perturbations.} We load Qwen3-8B and fix a single batch $\mathcal{B}$ of size 128 from SST-2. We generate 1000 independent Rademacher perturbations $\{z_i\}_{i=1}^{1000}$ and compute our gradient estimate with $\epsilon = 10^{-4}$. The same batch $\mathcal{B}$ is used for all 2000 loss evaluations, isolating perturbation variance from batch variance. For each $n_{pert} \in \{2, 4, 8, 16, 32, 64, 128, 256, 512\}$, we bootstrap 1000 times by sampling $n$ perturbations without replacement. Variance decreases as $O(1/n_{pert})$. } 
    \label{fig:perturbation_boxplot}
\end{figure}

\begin{figure}[H]
    \centering
    \includegraphics[width=0.48\textwidth]{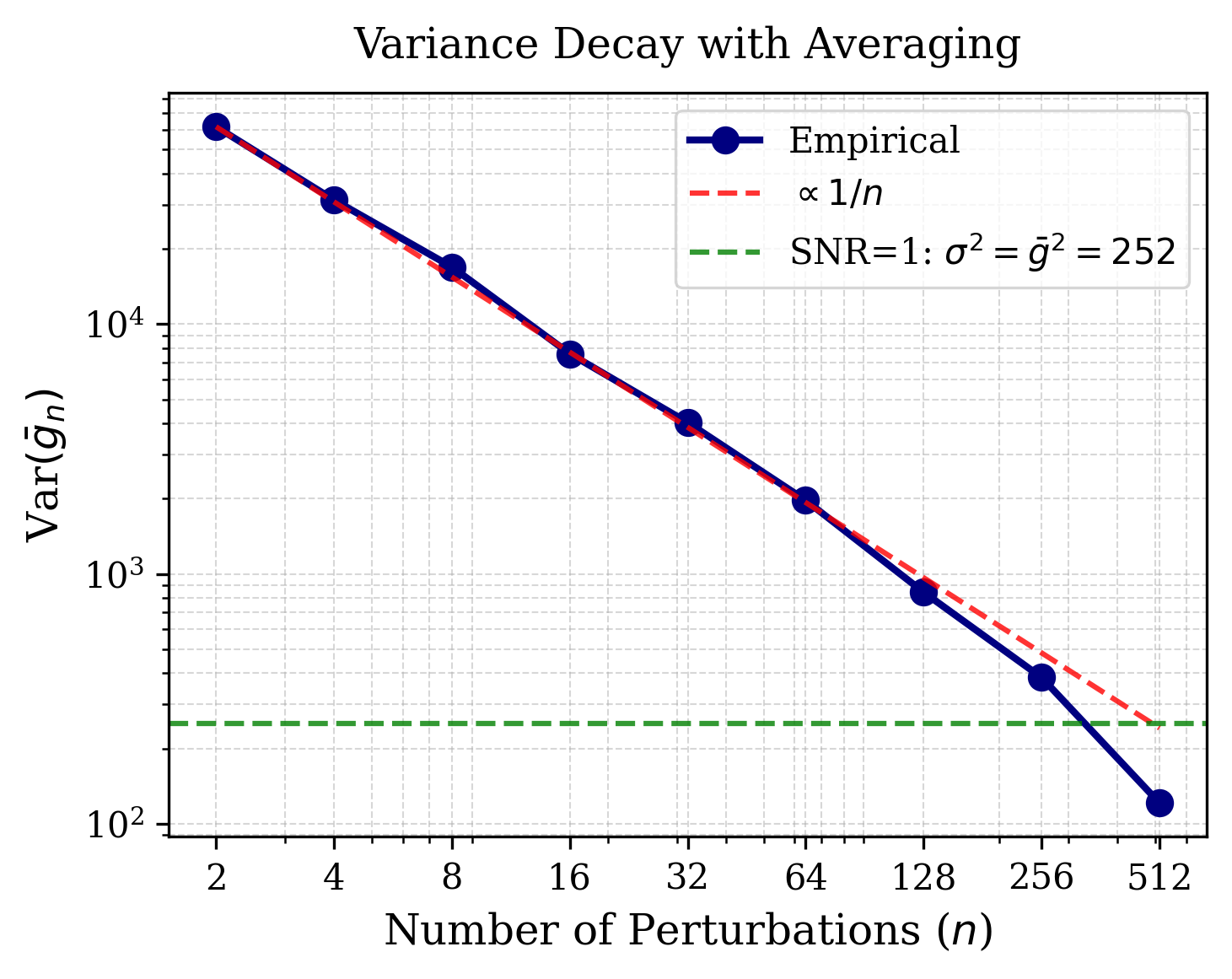}
    \caption{ \textbf{Empirical variance of the averaged finite difference estimator $\bar{g}_n$ as a function of $n_{pert}$.} Same experimental setup as Figure~\ref{fig:perturbation_boxplot}. Blue circles show empirical variance from 1000 bootstrap samples at each $n_{pert}$. Red dashed line shows the theoretical $O(1/n_{pert})$ decay. Green dashed line marks $\text{Var}(\bar{g}_{n_{pert}}) = \bar{g}^2 \approx 252$, the SNR${}=1$ threshold where standard deviation equals signal magnitude. The estimator crosses into the SNR${}>1$ regime around $n_{pert} \approx 256$. Deviation from $1/n_{pert}$ at large $n_{pert}$ is due to finite population effects (sampling 512 of 1000 total perturbations without replacement). }
    \label{fig:perturbation_variance}
\end{figure}

\subsection{Convex Optimization: Stiff Paraboloid Experiments}
To better study the conditions by which 1.5-SPSA outperforms 1SPSA, we construct a toy convex optimization problem. We create a paraboloid that must be rotated randomly as we use the Rademacher distribution and we want to examine which solver outperforms in well-conditioned loss landscapes, where the eigenvalues of the hessian are roughly equal, and which solver outperforms in ill-conditioned loss landscapes, where the eigenvalues of the hessian are orders of magnitude different. We see an example setup in \cref{fig:paraboloid} where we train with each solver until convergence. We randomize this experiment, and run one hundred runs each condition number in a sweep of condition numbers to see a clear pattern as shown in \cref{fig:paraboloid_runs}. As we increase the condition number, 1.5-SPSA shows faster and faster convergence over 1SPSA providing some evidence that the performance increase is tied to the condition number of the hessian. 
\begin{figure}[H]
    \centering
    \includegraphics[width=0.48\textwidth]{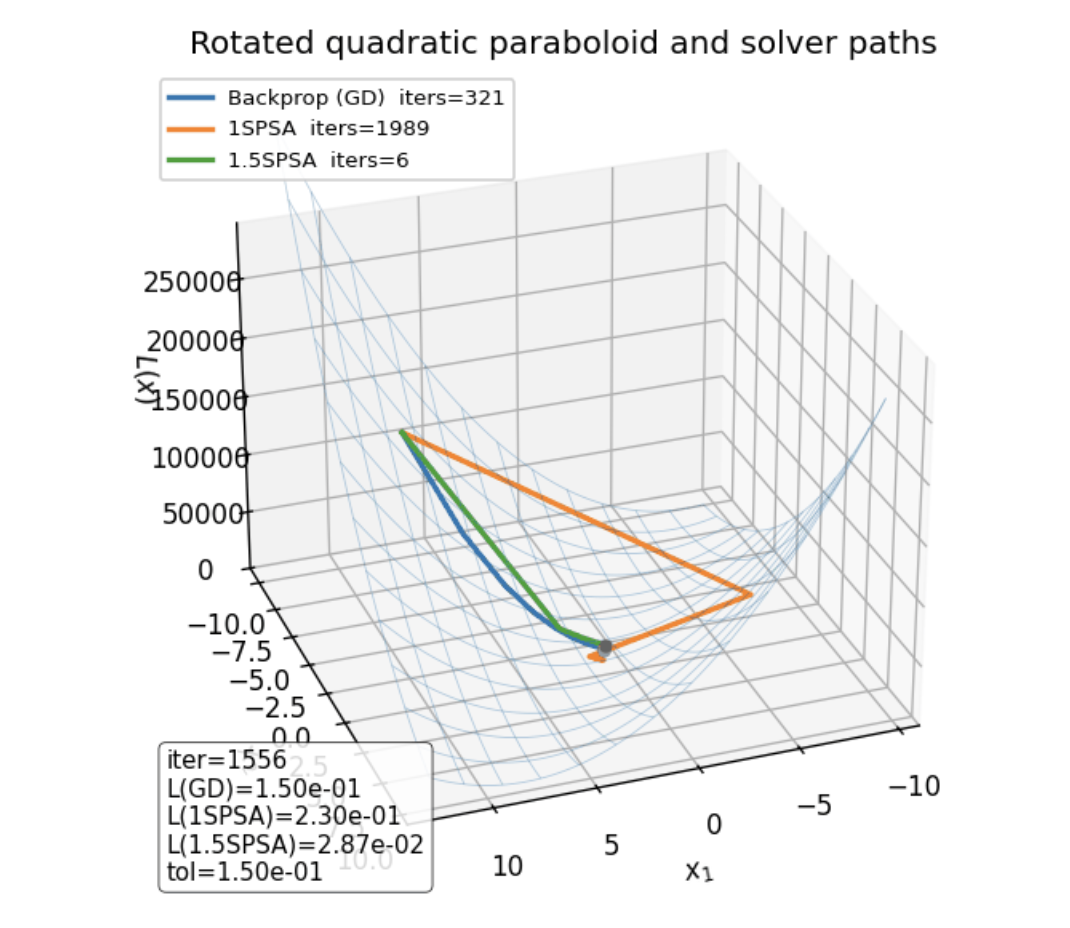}
    \caption{Comparing convergence rates for our "stiff" paraboloid with a $\kappa=100$. 1.5-SPSA converges in just 6 steps while 1SPSA takes $\sim$ 2000 steps. }
    \label{fig:paraboloid}
\end{figure}

\subsection{Non-Convex: DNC Overfit Compute Comparison}

Differentiable Neural Computers (DNCs) are notoriously difficult to train. To test 1.5-SPSA, we train on a sweep of DNC model sizes up to 1B parameters and for each one, overfit on a batch similar to the stiff paraboloid experiment but in this case the loss is massively non-convex. We train to convergence with 1.5-SPSA, 1SPSA, and BPTT (Backpropagation Through Time). We see that 1.5-SPSA outperforms 1SPSA for all runs. We plot in \cref{fig:overfit_15spsa_vs_1spsa_forward_passes} the total forward-passes to achieve near zero train loss, estimating a backward pass equals roughly two forward passes, therefore one step of BPTT costs three forward-passes. Despite this, BPTT mostly outperforms on this task suggesting BPTT is a more compute-efficient solver, at least on this task, yet 1.5-SPSA is more compute-efficient solver than 1SPSA. Additionally, while mostly, BPTT (Backpropagation through time) outperforms, there are cases where 1.5-SPSA requires less compute to achieve near zero loss (e.g. at small model sizes and 512 perturbations per step).

\begin{figure}[H]
\centering
\begin{tikzpicture}
\begin{loglogaxis}[
  title={DNC Compute-to-Zero Loss},
  title style={font=\small, yshift=2pt},
  xlabel={Model size (parameters)},
  ylabel={forward-passes to zero loss},
  xmajorgrids=true,
  ymajorgrids=true,
  grid style=dashed,
  width=8.3cm,
  height=6.0cm,
  xtick={3e5,1e6,4e6,1.7e7,6.7e7,2.7e8,1.1e9},
  xticklabels={300k,1M,4M,17M,68M,270M,1.1B},
  legend style={
    at={(0.5,-0.28)},
    anchor=north,
    draw=none,
    font=\scriptsize,
    row sep=-2pt,
    column sep=4pt,
    cells={anchor=west},
    inner xsep=1pt,
    inner ysep=1pt,
  },
  legend columns=3,
  xlabel style={font=\small},
  ylabel style={font=\small},
]

\addplot+[mark=*, thick, color=black] table {
304357 8610
1132901 3540
4362853 1440
17114213 600
67782757 270
269783141 111
};
\addlegendentry{BPTT}

\addplot+[mark=triangle*, thick, color=blue, solid] table {
304357 74160
1132901 40224
4362853 22176
17114213 10560
67782757 8400
269783141 10752
1076437093 12944
};
\addlegendentry{1SPSA @ 8}

\addplot+[mark=square*, thick, color=red, solid] table {
304357 226560
1132901 115200
4362853 48000
17114213 34560
67782757 23040
269783141 15360
1076437093 21120
};
\addlegendentry{1SPSA @ 96}

\addplot+[mark=diamond*, thick, color=green!60!black, solid] table {
304357 153600
1132901 105472
4362853 53248
17114213 22528
67782757 16384
269783141 20480
1076437093 24576
};
\addlegendentry{1SPSA @ 512}

\addplot+[mark=triangle*, very thick, color=blue, densely dotted, mark options={solid}] table {
304357 5848
1132901 5423
4362853 5644
17114213 6171
67782757 4777
269783141 4998
1076437093 6698
};
\addlegendentry{1.5-SPSA @ 8}

\addplot+[mark=square*, very thick, color=red, densely dotted, mark options={solid}] table {
304357 18142
1132901 9843
4362853 9071
17114213 6562
67782757 5211
269783141 7527
1076437093 7720
};
\addlegendentry{1.5-SPSA @ 96}

\addplot+[mark=diamond*, very thick, color=green!60!black, densely dotted, mark options={solid}] table {
304357 82000
1132901 61500
4362853 43050
17114213 22550
67782757 16400
269783141 11275
1076437093 9225
};
\addlegendentry{1.5-SPSA @ 512}

\end{loglogaxis}
\end{tikzpicture}
\caption{
DNC overfitting stress test: compute (measured as forward-passes or equivalent) to reach a near-zero loss threshold vs model size. We convert steps to forward-passes as follows: 1SPSA uses $2n_{\text{pert}}$ forward-passes per step, 1.5-SPSA uses $(2n_{\text{pert}}+1)$ forward-passes per step, and BPTT is approximated as $3$ forward-pass equivalents per step (forward + backward).
BPTT cannot run the 1.1B model in this setup due to GPU memory limits.
}
\label{fig:overfit_15spsa_vs_1spsa_forward_passes}
\end{figure}

\subsection{Experiment Configurations}
\label{sec:exp_configs}

Unless stated otherwise, all experiments use the following shared setup. We use the above configuration for: OPT-13B, OPT-30B, Qwen3-8B, and Qwen3-1.7B.

\paragraph{Zero-order methods (1SPSA / 1.5-SPSA).}
We sweep a tied step size and perturbation size, $\lambda=\epsilon$, over
\[
\lambda=\epsilon \in \{10^{-3},\,5\cdot 10^{-4},... , 10^{-7}\}.
\]
We fix the saturation exponent to $\alpha=0.1$ for all runs as explained in \cref{tab:alpha_ablation}, and apply a plateau schedule that simultaneously cuts both $\lambda$ and $\epsilon$ when the validation loss fails to improve for 10 consecutive evaluations. We do not use any learning-rate warmup. We sweep the number of perturbations per step $n_{\text{pert}} \in \{40,60,100\}$ and the effective batch size $\in \{128,256\}$ (via gradient accumulation if needed), with sequence length $256$ unless noted otherwise. In practice, $n_{\text{pert}}=60$ and $\lambda=\epsilon=10^{-4}$ is typically sufficient for stable 1.5-SPSA performance across our settings.

\paragraph{Backpropagation baseline (Adam).}
For Adam runs, we use $(\beta_1,\beta_2)=(0.9,0.99)$ with weight decay $10^{-3}$. 

\paragraph{Hardware.}
All runs are executed on a cluster of $8\times$A100 GPUs.

\paragraph{DNC setting.}
For DNC experiments, we use the same as above except we use a character-level tokenizer, sequence length $100$, an input embedding size of $128$, and add $\lambda=\epsilon=10^{-2}$ in the sweep for these runs as the weights are xavier-initialized.

\subsection{JL preservation of directional curvature}

\label{app:jl_curvature}

This appendix formalizes the following intuition used in our method:
\emph{if a Johnson--Lindenstrauss (JL) embedding preserves geometry (distances / inner products) on a set of perturbation directions, then it also preserves the directional curvature terms that define our probe-space preconditioner, with high probability.}
Throughout, fix a point $\theta\in\R^d$ and assume $L:\R^d\to\R$ is twice differentiable in a neighborhood of $\theta$ with Hessian
$H := \nabla^2 L(\theta)\in\R^{d\times d}$ (symmetric, possibly indefinite).

\begin{lemma}[JL inner-product preservation (normalized, pairwise form)]
\label{lem:jl_inner_product}
Let $R\in\R^{m\times d}$ be a \emph{linear} map (as in standard JL transforms: Gaussian, subgaussian, SRHT, CountSketch, etc.).
Fix a finite collection of pairs $\{(x_i,y_i)\}_{i=1}^n$ with $x_i\neq 0$ and $y_i\neq 0$, and define unit vectors
$u_i := x_i/\|x_i\|$ and $v_i := y_i/\|y_i\|$.
Assume the JL norm guarantee holds simultaneously on the finite set
\[
\mathcal{S}
:= \{u_i,\ v_i,\ u_i+v_i,\ u_i-v_i\}_{i=1}^n,
\]
namely for some $\varepsilon_{\mathrm{JL}}\in(0,1)$,
\begin{equation}
(1-\varepsilon_{\mathrm{JL}})\|s\|^2 \le \|Rs\|^2 \le (1+\varepsilon_{\mathrm{JL}})\|s\|^2
\qquad \forall s\in \mathcal{S}.
\label{eq:jl_norm_S}
\end{equation}
Then for every $i\in\{1,\dots,n\}$,
\begin{equation}
\big|\langle R x_i,\ R y_i\rangle - \langle x_i,\ y_i\rangle\big|
\;\le\;
\varepsilon_{\mathrm{JL}}\,\|x_i\|\,\|y_i\|.
\label{eq:jl_ip}
\end{equation}
\end{lemma}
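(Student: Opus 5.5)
The plan is the standard polarization argument: reduce to unit vectors, express inner products as differences of squared norms, and then apply the JL norm guarantee \eqref{eq:jl_norm_S} to the sum and difference vectors in $\mathcal{S}$.

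\textbf{Step 1: reduce to unit vectors.} Since $R$ is linear, $\langle Rx_i,Ry_i\rangle=\|x_i\|\,\|y_i\|\,\langle Ru_i,Rv_i\rangle$, and likewise $\langle x_i,y_i\rangle=\|x_i\|\,\|y_i\|\,\langle u_i,v_i\rangle$. Hence \eqref{eq:jl_ip} is equivalent to showing
\[
|\langle Ru_i,Rv_i\rangle-\langle u_i,v_i\rangle|\le\varepsilon_{\mathrm{JL}}
\]
for each fixed $i$.

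\textbf{Step 2: polarization.} I will use the polarization identity in both spaces, again relying on linearity of $R$ to write $R(u_i\pm v_i)=Ru_i\pm Rv_i$:
\[
\langle Ru_i,Rv_i\rangle=\tfrac14\bigl(\|R(u_i+v_i)\|^2-\|R(u_i-v_i)\|^2\bigr),
\qquad
\langle u_i,v_i\rangle=\tfrac14\bigl(\|u_i+v_i\|^2-\|u_i-v_i\|^2\bigr).
\]
Subtracting gives
\[
\langle Ru_i,Rv_i\rangle-\langle u_i,v_i\rangle=\tfrac14(\Delta_+-\Delta_-),
\qquad
\Delta_\pm:=\|R(u_i\pm v_i)\|^2-\|u_i\pm v_i\|^2.
\]

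\textbf{Step 3: apply the JL bound.} Because $u_i\pm v_i\in\mathcal{S}$, \eqref{eq:jl_norm_S} gives $|\Delta_\pm|\le\varepsilon_{\mathrm{JL}}\|u_i\pm v_i\|^2$. The triangle inequality then yields
\[
\tfrac14|\Delta_+-\Delta_-|\le\tfrac{\varepsilon_{\mathrm{JL}}}{4}\bigl(\|u_i+v_i\|^2+\|u_i-v_i\|^2\bigr).
\]

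\textbf{Step 4: parallelogram law.} By the parallelogram law, $\|u_i+v_i\|^2+\|u_i-v_i\|^2=2\|u_i\|^2+2\|v_i\|^2=4$, since $u_i$ and $v_i$ are unit vectors. The bound therefore becomes exactly $\varepsilon_{\mathrm{JL}}$. Multiplying back by $\|x_i\|\,\|y_i\|$ as in Step 1 gives \eqref{eq:jl_ip}.

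\textbf{Where care is needed.} There is no real obstacle. The one point to watch is bounding the two errors $\Delta_+$ and $\Delta_-$ separately by absolute values rather than trying to exploit cancellation between them; this is what makes the constant come out as exactly $\varepsilon_{\mathrm{JL}}$ and not $\varepsilon_{\mathrm{JL}}$ times some extra factor. The degenerate case $u_i=\pm v_i$ needs no special treatment. One side of the polarization is then $0$, which the JL bound handles trivially, and the other side is $4$. The hypotheses on $u_i$ and $v_i$ themselves are not needed for this argument, but they are harmless.
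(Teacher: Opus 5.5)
Your proof is correct and follows the paper's argument essentially step for step: polarization applied to the unit vectors $u_i,v_i$, each bracket bounded separately by $\varepsilon_{\mathrm{JL}}\|u_i\pm v_i\|^2$, the parallelogram law giving the sum $4$, and rescaling by $\|x_i\|\,\|y_i\|$ via linearity of $R$. You are also right that the JL hypothesis on $u_i$ and $v_i$ themselves is never used; the paper's proof does not use it either.
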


\begin{proof}
For each $i$, apply polarization to the unit vectors $u_i,v_i$:
\[
\langle Ru_i,Rv_i\rangle-\langle u_i,v_i\rangle
\]
\[
\begin{aligned}
=\tfrac14\Big(&\big[\|R(u_i+v_i)\|^2-\|u_i+v_i\|^2\big]\\
&-\big[\|R(u_i-v_i)\|^2-\|u_i-v_i\|^2\big]\Big).
\end{aligned}
\]
Using \eqref{eq:jl_norm_S}, we have
$\big|\|R(u_i\pm v_i)\|^2-\|u_i\pm v_i\|^2\big|\le \varepsilon_{\mathrm{JL}}\|u_i\pm v_i\|^2$,
hence
\begin{equation}
    \begin{aligned}
\big|\langle Ru_i,Rv_i\rangle-\langle u_i,v_i\rangle\big|
&\le \tfrac{\varepsilon_{\mathrm{JL}}}{4}\big(\|u_i+v_i\|^2+\|u_i-v_i\|^2\big)
\\&=\tfrac{\varepsilon_{\mathrm{JL}}}{4}\cdot 4
\\&=\varepsilon_{\mathrm{JL}}.
\end{aligned}
\end{equation}
Multiplying by $\|x_i\|\|y_i\|$ yields \eqref{eq:jl_ip}.
\end{proof}

\paragraph{Remark (always-valid weaker bound without normalization).}
If instead you assume JL norm preservation on $\{x_i,y_i,x_i\pm y_i\}$, then polarization yields the bound
\[
\big|\langle R x_i,\ R y_i\rangle - \langle x_i,\ y_i\rangle\big|
\;\le\;
\tfrac{\varepsilon_{\mathrm{JL}}}{2}\big(\|x_i\|^2+\|y_i\|^2\big),
\]
which is weaker than \eqref{eq:jl_ip} when $\|x_i\|$ and $\|y_i\|$ are very different.

\begin{theorem}[JL preservation of directional curvature on a finite set]
\label{thm:jl_curvature}
Fix $\theta\in\R^d$ and a twice-differentiable $L$ near $\theta$ with Hessian $H:=\nabla^2 L(\theta)\in\R^{d\times d}$.
Fix directions $\{z_i\}_{i=1}^n$ and define $v_i := Hz_i$ and the (true) directional curvature
\[
c_i := z_i^\top H z_i = \langle z_i,\ v_i\rangle.
\]
Let $R\in\R^{m\times d}$ be a (linear) JL transform (e.g., i.i.d.\ Gaussian entries $R_{jk}\sim \mathcal{N}(0,1/m)$).
Define the JL curvature proxy
\[
\tilde c_i := \langle Rz_i,\ Rv_i\rangle = \langle Rz_i,\ R(Hz_i)\rangle.
\]
Let $\varepsilon_{\mathrm{JL}}\in(0,1)$ and $\delta\in(0,1)$. If
\begin{equation}
m \;\gtrsim\; \varepsilon_{\mathrm{JL}}^{-2}\log\frac{n}{\delta},
\label{eq:m_scaling}
\end{equation}
then with probability at least $1-\delta$ over $R$, simultaneously for all $i\in\{1,\dots,n\}$,
\begin{equation}
\big|\tilde c_i - c_i\big|
\;\le\;
\varepsilon_{\mathrm{JL}}\,\|z_i\|\,\|Hz_i\|.
\label{eq:curv_abs_bound}
\end{equation}
Moreover, whenever $c_i\neq 0$,
\begin{equation}
\frac{|\tilde c_i - c_i|}{|c_i|}
\;\le\;
\varepsilon_{\mathrm{JL}}\cdot \frac{\|z_i\|\,\|Hz_i\|}{|\langle z_i,Hz_i\rangle|}
\;=\;
\frac{\varepsilon_{\mathrm{JL}}}{|\cos\angle(z_i,Hz_i)|}.
\label{eq:curv_rel_bound}
\end{equation}
\end{theorem}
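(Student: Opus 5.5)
The plan is to reduce the theorem to Lemma~\ref{lem:jl_inner_product} applied to the pairs $(x_i,y_i):=(z_i,Hz_i)$, after first establishing the norm guarantee \eqref{eq:jl_norm_S} on the relevant finite set via a union bound.

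\textbf{Step 1: the finite set and degenerate pairs.}
Set $u_i:=z_i/\|z_i\|$ and $v_i:=Hz_i/\|Hz_i\|$, and form $\mathcal{S}=\{u_i,v_i,u_i+v_i,u_i-v_i\}_{i=1}^n$, so that $|\mathcal{S}|\le 4n$.
Pairs with $z_i=0$ or $Hz_i=0$ are trivial. In that case $c_i=0$, and by linearity $\tilde c_i=\langle Rz_i,R Hz_i\rangle=0$, so \eqref{eq:curv_abs_bound} holds with equality $0\le 0$. We may therefore assume $z_i\neq 0$ and $Hz_i\neq0$, as the lemma requires.

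\textbf{Step 2: the distributional JL property and a union bound.}
This is the only probabilistic step. For Gaussian $R$ with entries $\mathcal N(0,1/m)$ and any fixed $s$, the quantity $m\|Rs\|^2/\|s\|^2$ is $\chi^2_m$. A standard Chernoff/Laurent--Massart bound then gives
\[
\Pr\bigl[\,\bigl|\|Rs\|^2-\|s\|^2\bigr|>\varepsilon_{\mathrm{JL}}\|s\|^2\,\bigr]\le 2e^{-c\,m\varepsilon_{\mathrm{JL}}^2}
\]
for an absolute constant $c>0$. The same bound holds with a different constant for subgaussian or other JL families, which is the sense in which \eqref{eq:m_scaling} uses $\gtrsim$.

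The set $\mathcal{S}$ depends only on $H$ and the fixed $z_i$, not on $R$, so a union bound over its at most $4n$ elements is legitimate. It gives a failure probability of at most $8n\,e^{-c m\varepsilon_{\mathrm{JL}}^2}$. This is at most $\delta$ once $m\ge c^{-1}\varepsilon_{\mathrm{JL}}^{-2}\log(8n/\delta)$, which is exactly \eqref{eq:m_scaling} up to constants.

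\textbf{Step 3: conclude via the lemma.}
On the event of Step 2, \eqref{eq:jl_norm_S} holds for all of $\mathcal{S}$. Lemma~\ref{lem:jl_inner_product} then yields, simultaneously for all $i$,
\[
\bigl|\langle Rz_i,RHz_i\rangle-\langle z_i,Hz_i\rangle\bigr|\le\varepsilon_{\mathrm{JL}}\|z_i\|\|Hz_i\|,
\]
which is \eqref{eq:curv_abs_bound}.

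For the relative bound \eqref{eq:curv_rel_bound}, suppose $c_i\ne0$. Then $z_i\neq 0$ and $Hz_i\neq 0$, so we can divide by $|c_i|=|\langle z_i,Hz_i\rangle|$. We then use $\langle z_i,Hz_i\rangle=\|z_i\|\|Hz_i\|\cos\angle(z_i,Hz_i)$, which is well defined because both vectors are nonzero.

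\textbf{Main obstacle.}
Step 2 is the main point of care. The union bound must range over the normalized set $\mathcal S$, including the sums and differences $u_i\pm v_i$, and not only over $z_i$ and $Hz_i$. Otherwise the polarization argument behind the lemma has no guarantee to draw on.

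Two further points need checking. First, if $u_i=v_i$ then $u_i-v_i=0$; for a zero vector the JL inequality holds trivially, so this is harmless. Second, independence of $\mathcal S$ from $R$ must be verified, and it holds because $H$ and the $z_i$ are fixed before $R$ is drawn.

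Everything else is deterministic and follows directly from the lemma.
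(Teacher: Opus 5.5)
Your proposal is correct and follows essentially the same route as the paper. Both apply Lemma~\ref{lem:jl_inner_product} to the pairs $(z_i, Hz_i)$ after a union bound over the at most $4n$ normalized vectors and their sums and differences, and both handle the degenerate case $Hz_i=0$ separately. The only difference is that you write out the $\chi^2_m$ tail bound for Gaussian $R$, which the paper simply cites as a ``standard JL finite-set bound.''
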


\begin{proof}
For each $i$ with $v_i=Hz_i\neq 0$, apply Lemma~\ref{lem:jl_inner_product} to the pair $(x_i,y_i)=(z_i,v_i)$.
This requires JL norm preservation on the set
$\{z_i/\|z_i\|,\ v_i/\|v_i\|,\ z_i/\|z_i\|\pm v_i/\|v_i\|\}_{i=1}^n$,
whose cardinality is at most $4n$.
Standard JL finite-set bounds imply that \eqref{eq:jl_norm_S} holds for all elements of this set
with probability at least $1-\delta$ provided $m\gtrsim \varepsilon_{\mathrm{JL}}^{-2}\log((4n)/\delta)$,
which is equivalent to \eqref{eq:m_scaling} up to constants.
Under this event, Lemma~\ref{lem:jl_inner_product} yields
$|\langle Rz_i,Rv_i\rangle-\langle z_i,v_i\rangle|\le \varepsilon_{\mathrm{JL}}\|z_i\|\|v_i\|$,
which is exactly \eqref{eq:curv_abs_bound} (since $v_i=Hz_i$).
The relative form \eqref{eq:curv_rel_bound} follows by dividing by $|c_i|=|\langle z_i,v_i\rangle|$ and using
$|\langle z_i,v_i\rangle|=\|z_i\|\,\|v_i\|\,|\cos\angle(z_i,v_i)|$.
If $v_i=0$, then $c_i=\tilde c_i=0$ and the bounds hold trivially.
\end{proof}

\paragraph{Remarks.}
\begin{itemize}
\item \textbf{Linearity of $R$.} The definition $\tilde c_i=\langle Rz_i,R(Hz_i)\rangle$ uses only that $R$ is linear so that $R(Hz_i)$ is well-defined as applying the same JL map to the vector $Hz_i$. This is standard for JL embeddings.
\item \textbf{Relationship between $m$ and $d$.} The guarantee is \emph{finite-set}: $m$ scales like $\log n$ (not $d$) because we only need geometry preservation on the particular vectors used in the step. Note that $Hz_i$ is formed in the ambient $\R^d$ first (conceptually, in analysis), and then $R$ is applied to $z_i$ and $Hz_i$ as vectors in $\R^d$. No claim is made that one can apply $H$ in the compressed space.
\item \textbf{Small-curvature regime.} The relative bound depends on $|\cos\angle(z_i,Hz_i)|$ and becomes loose when $z_i$ lies near the null space of $H$ (or, more generally, when $z_i$ and $Hz_i$ are nearly orthogonal). This is exactly why curvature-based weights must regularize near $c_i\approx 0$ (e.g., via a floor $\lambda_{\mathrm{reg}}$ and saturation exponent $\alpha$).
\end{itemize}

\begin{lemma}[Bias of the 3-point curvature estimator (correct Lagrange remainder)]
\label{lem:fd_curv_bias}
Assume $L$ is three-times continuously differentiable on a neighborhood of the segment
$\{\theta+t z:\ |t|\le \epsilon_{\mathrm{fd}}\}$.
Define the 3-point estimator
\[
\hat c(z)
:=
\frac{L(\theta+\epsilon_{\mathrm{fd}} z)-2L(\theta)+L(\theta-\epsilon_{\mathrm{fd}} z)}{\epsilon_{\mathrm{fd}}^2}.
\]
Let $H=\nabla^2L(\theta)$. Then there exist $\xi_+\in(0,1)$ and $\xi_-\in(-1,0)$ such that
\begin{equation}
\begin{aligned}
\hat c(z)
=
z^\top H z
\;+ \;
\frac{\epsilon_{\mathrm{fd}}}{6}
\Big(
\nabla^3 L(\theta+\xi_+\epsilon_{\mathrm{fd}} z)[z,z,z]\\
-
\nabla^3 L(\theta+\xi_-\epsilon_{\mathrm{fd}} z)[z,z,z]
\Big).
\label{eq:fd_bias_exact}
\end{aligned}
\end{equation}
Consequently,
\begin{equation}
|\hat c(z)-z^\top H z|
\;\le\;
\frac{\epsilon_{\mathrm{fd}}}{3}\,
\sup_{|t|\le \epsilon_{\mathrm{fd}}}
\big|\nabla^3 L(\theta+t z)[z,z,z]\big|.
\label{eq:fd_bias_bound_general}
\end{equation}
In particular, if $|\nabla^3 L(u)[z,z,z]|\le M\|z\|^3$ along the segment, then
\begin{equation}
|\hat c(z)-c(z)| \;\le\; \frac{\epsilon_{\mathrm{fd}}}{3}\, M \|z\|^3.
\label{eq:fd_bias_bound}
\end{equation}
\end{lemma}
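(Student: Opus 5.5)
We reduce everything to the one-variable function $\phi(t) := L(\theta + t z)$ for $|t|\le \epsilon_{\mathrm{fd}}$. By the $C^3$ hypothesis on a neighborhood of the segment, $\phi$ is $C^3$ on $[-\epsilon_{\mathrm{fd}},\epsilon_{\mathrm{fd}}]$. By the chain rule its derivatives are
\[
\phi'(t)=\nabla L(\theta+tz)^\top z,\qquad \phi''(0)=z^\top H z,\qquad \phi'''(t)=\nabla^3L(\theta+tz)[z,z,z].
\]
In this notation $\hat c(z) = \big(\phi(\epsilon_{\mathrm{fd}})-2\phi(0)+\phi(-\epsilon_{\mathrm{fd}})\big)/\epsilon_{\mathrm{fd}}^2$, so the statement is purely about a scalar second difference.

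Next we Taylor-expand $\phi(\pm\epsilon_{\mathrm{fd}})$ about $0$ to second order with the Lagrange form of the remainder. This gives
\[
\phi(\epsilon_{\mathrm{fd}})=\phi(0)+\epsilon_{\mathrm{fd}}\phi'(0)+\tfrac{\epsilon_{\mathrm{fd}}^2}{2}\phi''(0)+\tfrac{\epsilon_{\mathrm{fd}}^3}{6}\phi'''(\xi_+\epsilon_{\mathrm{fd}})
\]
with $\xi_+\in(0,1)$. Likewise,
\[
\phi(-\epsilon_{\mathrm{fd}})=\phi(0)-\epsilon_{\mathrm{fd}}\phi'(0)+\tfrac{\epsilon_{\mathrm{fd}}^2}{2}\phi''(0)-\tfrac{\epsilon_{\mathrm{fd}}^3}{6}\phi'''(\xi_-\epsilon_{\mathrm{fd}})
\]
with $\xi_-\in(-1,0)$, where the intermediate point lies strictly between $-\epsilon_{\mathrm{fd}}$ and $0$.

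Adding the two expansions, the first-order terms cancel and the constant terms combine to $2\phi(0)$. Subtracting $2\phi(0)$ and dividing by $\epsilon_{\mathrm{fd}}^2$ leaves $\phi''(0)$ plus $\tfrac{\epsilon_{\mathrm{fd}}}{6}\big(\phi'''(\xi_+\epsilon_{\mathrm{fd}})-\phi'''(\xi_-\epsilon_{\mathrm{fd}})\big)$. Substituting the chain-rule identities for $\phi''(0)$ and $\phi'''$ gives exactly \eqref{eq:fd_bias_exact}.

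The bounds then follow directly. For \eqref{eq:fd_bias_bound_general}, apply the triangle inequality to the difference of the two third-derivative terms and bound each by the supremum over $|t|\le\epsilon_{\mathrm{fd}}$; this gives $\tfrac{\epsilon_{\mathrm{fd}}}{6}\cdot 2\sup = \tfrac{\epsilon_{\mathrm{fd}}}{3}\sup$. For \eqref{eq:fd_bias_bound}, insert the assumption $|\nabla^3L(u)[z,z,z]|\le M\|z\|^3$, writing $c(z):=z^\top Hz$.

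The only point requiring care is the sign convention for $\xi_-$. The expansion on the left is taken with the increment $-\epsilon_{\mathrm{fd}}$, so the third-order term is $\tfrac{(-\epsilon_{\mathrm{fd}})^3}{6}\phi'''$. That negative sign is what produces the minus sign in front of the second third-derivative term. We will also note that there is no continuity gap: $C^3$ on a neighborhood guarantees that the Lagrange form applies and that the supremum is finite, since it is taken over a compact segment.
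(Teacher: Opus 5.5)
Your proposal is correct and follows essentially the same route as the paper's proof. Both restrict to the scalar function $t\mapsto L(\theta+tz)$ and expand $L(\theta\pm\epsilon_{\mathrm{fd}}z)$ to second order with Lagrange remainders at intermediate points in $(0,\epsilon_{\mathrm{fd}})$ and $(-\epsilon_{\mathrm{fd}},0)$. The odd first-order terms then cancel in the symmetric stencil, and the bounds follow by applying the triangle inequality to the two third-derivative terms.
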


\begin{proof}
Let $f(t)=L(\theta+t z)$. Taylor's theorem about $t=0$ to second order with Lagrange remainder gives
\[
f(\pm \epsilon_{\mathrm{fd}})
=
f(0)\pm f'(0)\epsilon_{\mathrm{fd}}+\tfrac12 f''(0)\epsilon_{\mathrm{fd}}^2
\pm \tfrac16 f^{(3)}(\eta_\pm)\epsilon_{\mathrm{fd}}^3
\]
for some $\eta_+\in(0,\epsilon_{\mathrm{fd}})$ and $\eta_-\in(-\epsilon_{\mathrm{fd}},0)$.
Plugging into the symmetric stencil cancels the odd first-order term and yields
\[
\hat c(z)=f''(0)+\frac{\epsilon_{\mathrm{fd}}}{6}\big(f^{(3)}(\eta_+)-f^{(3)}(\eta_-)\big).
\]
Since $f''(0)=z^\top \nabla^2 L(\theta) z$ and $f^{(3)}(\eta)=\nabla^3 L(\theta+\eta z)[z,z,z]$,
we obtain \eqref{eq:fd_bias_exact} by setting $\eta_\pm=\xi_\pm\epsilon_{\mathrm{fd}}$.
The bounds \eqref{eq:fd_bias_bound_general}--\eqref{eq:fd_bias_bound} follow immediately by taking absolute values.
\end{proof}

\paragraph{Notation note.}
To avoid overloading symbols, this appendix uses $\varepsilon_{\mathrm{JL}}$ for the JL distortion level and
$\epsilon_{\mathrm{fd}}$ for the finite-difference probe radius (the SPSA step used in $\hat c(\cdot)$).

\paragraph{Combined guarantee (one-line takeaway).}
For the finite set $\{z_i\}_{i=1}^n$ used in a step, with probability at least $1-\delta$ over the JL map $R$,
simultaneously for all $i$,
\begin{equation}
\begin{aligned}
|\langle Rz_i,\ R(Hz_i)\rangle - \hat c(z_i)|
\;\le\;
\underbrace{\varepsilon_{\mathrm{JL}}\,\|z_i\|\,\|Hz_i\|}_{\text{JL inner-product distortion}}
\; \\ + \;
\underbrace{\frac{\epsilon_{\mathrm{fd}}}{3}\sup_{|t|\le \epsilon_{\mathrm{fd}}}
|\nabla^3 L(\theta+t z_i)[z_i,z_i,z_i]|}_{\text{finite-difference bias}}.
\end{aligned}
\end{equation}

\subsection{E. $\alpha$ sweep and failure-rate plots}
\label{app:alpha_sweep}

\begin{table}[H]
\centering
\small
\begin{tabular}{lcccc}
\toprule
\textbf{Alpha} & \textbf{Iter} & \textbf{Best Test} & \textbf{Note} \\
\midrule
0.01  & 51 & 93.6\% &  $\times$ DIVERGED \\
 \textbf{0.1}   & \textbf{50} & \textbf{94.5\%} & \textbf{Stable} \\
0.25  & 51 & 91.7\% & Stable \\
0.5   & 50 & 89.7\% & Stable \\
0.75  & 50 & 89.0\% & Stable \\
1.0   & 52 & 89.2\% & Stable \\
\bottomrule
\end{tabular}
\caption{Alpha ablation: stability and performance. Lower alpha (0.01) risks collapse; moderate alpha (0.1) creates stability without sacrificing peak accuracy. }
\label{tab:alpha_ablation}
\end{table}

\subsection{E. $\epsilon$ sweep and failure-rate plots}
\begin{table}[H]
\centering
\small
\begin{tabular}{lccll}
\toprule
\textbf{Epsilon} & \textbf{Best Test} & \textbf{Status} \\
\midrule
1e-2 & 80.5\%  & Never improved \\
5e-3 & 86.5\%  & Improved slightly then diverged \\
1e-3 & 93.0\%  & Good \\
5e-4 & 93.6\%  & Very good \\
 \textbf{1e-4} & \textbf{94.5\%}  &  \textbf{BEST (eps=lr)} \\
5e-5 & 92.7\%  & Good \\
1e-5 & 80.5\%  & Never improved \\
5e-6 & 80.5\%  & Never improved \\
\bottomrule
\end{tabular}
\caption{Epsilon Ablation Results ($\text{lr}=10^{-4}$ for all runs). Best performance at $\epsilon=10^{-4}$. The range $10^{-5} > \epsilon > 10^{-3}$ converge well, outside this range they fail. Intuitively, this makes sense as we are measuring where we are stepping to vs. measuring farther or earlier than we are stepping to. }
\label{tab:eps_ablation}
\end{table}

\onecolumn
\subsection{F. Systems appendix: Bit packing + Triton kernels}
\label{app:triton_code_bitpack}
\subsection{Wall-Clock Per Probe Generation and Step Comparisons on OPT-13B and 96 perturbations per step}

\begin{table}[H]
\centering
\small
\setlength{\tabcolsep}{6pt}
\begin{tabular}{lrrrrr}
\toprule
Method & \multicolumn{1}{c}{Probe gen} & \multicolumn{1}{c}{96 perts} & \multicolumn{1}{c}{Step} & \multicolumn{1}{c}{Speedup} & \multicolumn{1}{c}{Speedup} \\
& \multicolumn{1}{c}{(ms/pert)} & \multicolumn{1}{c}{(s)} & \multicolumn{1}{c}{(s)} & \multicolumn{1}{c}{(probe gen)} & \multicolumn{1}{c}{(step)} \\
\midrule
\texttt{triton\_bitpacked}  & 102.1 & 9.8  & 21.7 & 4.9$\times$ & 2.76$\times$ \\
\texttt{pytorch\_flat}      & 246.6 & 23.7 & 35.6 & 2.0$\times$ & 1.68$\times$ \\
\texttt{pytorch} & 499.5 & 48.0 & 59.9 & 1.0$\times$ & 1.00$\times$ \\
\bottomrule
\end{tabular}
\caption{
Perturbation generation comparison. ``Speedup (probe gen)'' is computed from ms/pert relative to naive pytorch implementation. ``Speedup (total)'' is computed from end-to-end total time relative to original, per parameter pytorch implementation. Analysis done in series on a single A100 to isolate the impact of just the triton kernel. 
}
\label{tab:probe_gen_speed}
\end{table}

\noindent\textbf{Reference Triton code}
\begingroup\small
\begin{verbatim}
@triton.jit
def _unpack_and_apply(w_ptr, packed_ptr, n_elements, alpha, BLOCK_SIZE: tl.constexpr):
    pid = tl.program_id(axis=0)
    offsets = pid * BLOCK_SIZE + tl.arange(0, BLOCK_SIZE)
    mask = offsets < n_elements
    w = tl.load(w_ptr + offsets, mask=mask)
    byte_idx = offsets // 8
    bit_idx = offsets % 8
    packed_byte = tl.load(packed_ptr + byte_idx, mask=mask)
    bit = (packed_byte >> bit_idx) & 1
    sign = tl.where(bit == 1, 1.0, -1.0)
    tl.store(w_ptr + offsets, w + alpha * sign, mask=mask)

@triton.jit
def _unpack_and_accumulate(
    grad_ptr, packed_ptr, n_elements, coeff, BLOCK_SIZE: tl.constexpr
):
    pid = tl.program_id(axis=0)
    offsets = pid * BLOCK_SIZE + tl.arange(0, BLOCK_SIZE)
    mask = offsets < n_elements
    byte_idx = offsets // 8
    bit_idx = offsets % 8
    packed_byte = tl.load(packed_ptr + byte_idx, mask=mask)
    bit = (packed_byte >> bit_idx) & 1
    sign = tl.where(bit == 1, 1.0, -1.0)
    grad = tl.load(grad_ptr + offsets, mask=mask)
    tl.store(grad_ptr + offsets, grad + coeff * sign, mask=mask)
\end{verbatim}
\endgroup

\end{document}